\newtheorem{corollary}{Corollary}
\newtheorem{problem*}{Problem}
\newtheorem{theorem}{Theorem}
\newtheorem{proposition}{Proposition}
\newtheorem*{example*}{Example}
\DeclareMathOperator*{\argmin}{argmin}
\definecolor{darkgreen}{RGB}{204,102,0}
\newcommand{\cA}{\mathcal{A}}  
 \newcommand{\cD}{\mathcal{D}}
 \newcommand{\cL}{\mathcal{L}}
\newcommand{\cO}{\mathcal{O}}
 \newcommand{\cY}{\mathcal{Y}}
 \newcommand{\cX}{\mathcal{X}}
 \newcommand{\RR}{\mathbb{R}}
\newcommand{\btheta}{{\bm{\theta}}}
\newcommand{\minimize}[1]{\underset{{#1}}{\text{minimize}}}
\newcommand{\oset}[3][0ex]{\mathrel{\mathop{#3}\limits^{\vbox to#1{\kern-2\ex@\hbox{$\scriptstyle#2$}\vss}}}}
\newcommand{\optimal}[1]{\oset{\scalebox{.5}{$\star$}}{#1}}
\LetLtxMacro\orgvdots\vdots
\LetLtxMacro\orgddots\ddots
\def\opttheta{\optimal{\bm{\theta}}}
\def\bartheta{\bar{\bm{\theta}}}
\title{Pruning has a disparate impact on model accuracy}
\author{%
  Cuong Tran\\
  Department of Computer Science\\
  Syracuse University\\
  \texttt{cutran@syr.edu} \\
  \And
  Ferdinando Fioretto\\
Department of Computer Science\\
Syracuse University\\
  \texttt{ffiorett@syr.edu} \\
  \AND
  Jung-Eun Kim\thanks{This work was partly conducted when Jung-Eun Kim was an assistant professor at Syracuse University.} \\
  Department of Computer Science\\
  North Carolina State University \\
  \texttt{jung-eun.kim@ncsu.edu} \\
  \And
  Rakshit Naidu\\
  Department of Computer Science\\
  Carnegie Mellon University\\
  \texttt{rnemakal@andrew.cmu.edu} \\
}
\begin{document}

\maketitle

\begin{abstract}
Network pruning is a widely-used compression technique that is able to significantly scale down overparameterized models with minimal loss of accuracy. This paper shows that pruning may create or exacerbate disparate impacts. The paper sheds light on the factors to cause such disparities, suggesting differences in gradient norms and distance to decision boundary across groups to be responsible for this critical issue. It analyzes these factors in detail, providing both theoretical and empirical support, and proposes a simple, yet effective, solution that mitigates the disparate impacts caused by pruning. 
\end{abstract}

\section{Introduction}
\label{sec:intro}

As deep learning models evolve and become more powerful, they also
become larger and more costly to store and execute. The trend hinders
their deployment in resource-constrained platforms, such as embedded
systems or edge devices, which require efficient models in time and
space.  To address this challenge, studies have developed a variety
of techniques to prune the relatively insignificant or insensitive
parameters from a neural network while ensuring competitive
accuracy \cite{aghli2021combining,
baykal2019sipping,
blalock2020state,
Renda2020Comparing,
Han2015NIPS,sehwag2019compact,
zhang2018systematic}. 
When a model needs to be developed to 
fit given and certain requirements in size and resource consumption, 
a pruned model which is derived from a large, rigorously-trained, and
(often) over-parameterized model, is regarded as a de-facto standard. 
That is because it performs incomparably better than a same-size dense 
model which is trained from scratch, when the same amount of
effort and resources are invested.

In spite of its strengths, pruning has been showed to induce or 
exacerbate disparate effects in the accuracy of the resulting 
reduced models \cite{Hooker2020CharacterisingBI, Hooker2020WhatDC}. 
Intuitively, the removal of model weights affects the process 
in which the network separates different classes, which can have 
contrasting consequences for different groups of individuals.
This paper further shows that the accuracy of the pruned models tends 
to increase (decrease) more in classes that had already 
high (low) accuracy in the original model, leading to a  
``the rich get richer'' and ``the poor get poorer'' effect. 
This \emph{Matthew} effect is illustrated in Figure \ref{fig:motivation}. 
The figure shows the accuracy of a facial recognition task on 
different demographic groups for several pruning rates (indicating 
the percentage of parameters removed from the original models).
Notice how the accuracy of the majority group (White) tends to 
increase while that of the minority groups tends to decrease as the 
pruning ratio increases.

\begin{figure*}[t]
 \centering
    \includegraphics[width=\linewidth,height=100pt]{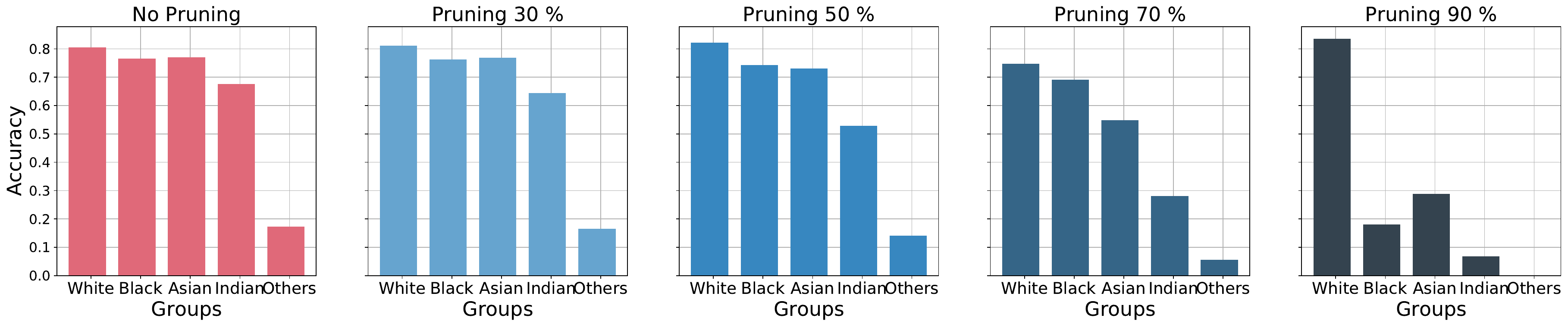}
    \caption{Accuracy of each demographic group in the UTK-Face dataset using Resnet18 \cite{he2016deep}, at the increasing of the pruning rate.}
   \label{fig:motivation}
\end{figure*}

Following these observations, we shed light on the factors to 
cause such disparities. The theoretical findings suggest the presence 
of two key factors responsible for why accuracy disparities arise 
in pruned models: 
{\bf(1)} disparity in \emph{gradient norms} across groups, and {\bf
(2)} disparity in \emph{Hessian matrices} associated with the loss function computed using a group's data. 
Informally, the former carries information about the groups' local optimality, 
while the latter relates to model separability.
We analyze these factors in detail, providing both theoretical and
empirical support on a variety of settings, networks, and datasets. 
By recognizing these factors, we also develop a simple yet
effective training technique that largely mitigates the disparate impacts caused by pruning. The method is based on an alteration of the loss 
function to include components that penalize disparity of the average 
gradient norms and distance to decision boundary across groups. 

These findings are significant: {\em Pruning is a key enabler for
neural network models in embedded systems with deployments in
security cameras and sensors for autonomous devices for applications
where fairness is an essential need. 
Without careful consideration of the fairness impact of these
techniques, the resulting models can have profound effects on our
society and economy}.

\subsubsection*{Related work} 
Fairness and network pruning have been long studied in isolation. 
The reader is referred to the related papers and surveys on fairness 
\citep{
barocas2017fairness,caton2020fairness,dwork2012fairness,NIPS2016_9d268236,mehrabi2021survey} and 
pruning \citep{
aghli2021combining,baykal2019sipping,blalock2020state,
Renda2020Comparing,Han2015NIPS,sehwag2019compact,Wiebke2022,zhang2018systematic} 
for a review on these areas. 

The recent interest in assessing societal values of machine learning models 
has seen an increase of studies at the intersection of different properties 
of a learning model and their effects on fairness. For example, \citet{xu2021robust}
studies the setting of adversarial robustness and show that adversarial training 
introduces unfair outcomes in term of accuracy parity \cite{zhao2019inherent}. 
\citet{zhu2021rich} show that semisupervised settings can introduce unfair outcomes 
in the resulting accuracy of the learned models. 
Finally, several authors have also shown that private training can have 
unintended disparate impacts to the resulting models' outputs \cite{NEURIPS2019_eugene,fioretto:arxiv22b,
Fioretto:NeurIPS21b,Uniyal2021DPSGDVP,Zhu:AAAI2021} 
and downstream decisions \cite{pujol:20,Tran:IJCAI21}. 

Network compression has also been shown to have a profound impact towards 
the model fairness. For example, several works observed empirically that 
network compression may amplify unfairness in different learning tasks~\cite{paganini2020prune, 
Hooker2020WhatDC, Hooker2020CharacterisingBI, Joseph2020GoingBC}.
Most of the focus has been on vision tasks and in identifying the 
set of \emph{Pruning Identified Exemplars} (PIEs), the samples that are 
impacted most under the compression scheme and conclude that PIEs belongs 
to low frequency groups (those observed at the tail of the data distribution).
\citet{Blakeney2021SimonSE} further investigate how bias could be evaluated 
and mitigated in pruned neural networks using knowledge distillation while  
\citet{hosseinilearning} observed empirically that knowledge distillation 
processes may produce unfair student models. 
The impact of network compression towards fairness has also been assessed 
in natural language tasks. For example, \citet{Du2021WhatDC} and 
\citet{xu-etal-2021-beyond} empirically measure the robustness of 
compressed large language models, while \citet{Ahia2021TheLD} look 
into how compression schemes affects data-limited regimes. Finally, 
\citet{Xu2022CanMC} investigate ways to improve fairness in generative 
language models by compressing them.  
We also note that, concurrently to this work, \citet{good:22} studied 
the relative distortion in recall for various classes. They show that pruning 
has a Matthews effect on the recall for various classes and 
propose an algorithm to attenuate such an effect.

This paper builds on this body of work and their important empirical 
observations and provides a step towards a deeper theoretical understanding 
of the fairness issues arising as a result of pruning. It derives conditions
and studies the causes of unfairness in the context of pruning as well 
as it introduces mitigating guidelines.

\section{Problem settings and goals}
The paper considers datasets $D$ consisting of $n$ datapoints 
$(\bm{x}_i, a_i, y_i)$, with $i \in [n]$, drawn i.i.d.~from an unknown
distribution $\Pi$. Therein, $\bm{x}_i \in \cX$ is a feature vector,
$a_i \in \cA$ with $\cA = [m]$ (for some finite $m$) is a demographic group 
attribute, and $y_i \in \cY$ is a class label. For example, consider the 
case of a face recognition task. The training example feature $\bm{x}_i$ may 
describe a headshot of an individual, the protected attribute $a_i$
may describe the individual's gender or ethnicity, and $y_i$ represents
the identity of the individual. 
The goal is to learn a predictor $f_\btheta : \cX \to \cY$, where $\btheta$
is a $k$-dimensional real-valued vector of parameters that minimizes 
the empirical risk function:
\begin{align}
\label{eq:erm}
\optimal{\btheta} = \argmin_\btheta J(\bm{\btheta}; D) = 
\frac{1}{n} \sum_{i=1}^n \ell(f_\btheta(\bm{x}_i), y_i),
\end{align}
where $\ell: \cY \times \cY \to \RR_+$ is a non-negative \emph{loss function} 
that measures the model quality. 

We focus on analyzing properties arising when extracting a small model $f_{\bar{\btheta}}$ with $\bar{\btheta} \subset\, \optimal{\btheta}$ of size $|\bar{\btheta}| = \bar{k} \ll k$.
Model $f_{\bar{\btheta}}$ is constructed by pruning the least important 
values or filters from vector $\optimal{\btheta}$ (i.e., those with smaller 
values in magnitude) according to a prescribed criterion, such as an $\ell_p$ 
norm \cite{NIPS1988_07e1cd7d, Han2015NIPS}.
The paper focuses on understanding the fairness impacts (as defined next) 
arising when pruning general classifiers, such as neural networks. 

\paragraph{Fairness}
The fairness analysis focuses on the notion of \emph{excessive loss}, 
defined as the difference between the original and the pruned risk 
functions over some group $a \in \cA$:
\begin{equation}
\label{eq:2}
    R(a) = J(\bar{\btheta}; D_a) - J(\optimal{\btheta}; D_a),
\end{equation}
where $D_a$ denotes the subset of the dataset $D$ containing samples 
($\bm{x}_i, a_i, y_i$) whose group membership $a_i = a$. 
Intuitively, the excessive loss represents the change in loss (and thus, in 
accuracy) that a given group experiences as a result of pruning.
Fairness is measured in terms of the maximal \emph{excessive loss difference}, 
also referred to as \emph{fairness violation}:
\begin{equation}
\label{eq:3}
    \xi(D) = \max_{a, a' \in \cA} |R(a) - R(a')|,
\end{equation}
defining the largest excessive loss difference across all protected groups. 
(Pure) fairness is achieved when $\xi(D) = 0$, and thus 
a fair pruning method aims at minimizing the excessive loss difference. 

The goal of this paper is to shed light on why fairness issues arise 
(i.e., $ R(a) > 0)$ as a result of pruning, why some groups suffer more than others (i.e., $R(a) > R(a'))$, and what mitigation measures could be taken
to minimize unfairness due to pruning.

We use the following notation: variables are denoted by calligraph 
symbols, vectors or matrices by bold symbols, and sets by uppercase symbols. 
Finally, $\| \cdot \|$ denotes the Euclidean norm and we use
$f_{\btheta}(\bm{x})$ to refer to the model' \emph{soft} outputs.
All proofs are reported in Appendix~\ref{sec:missing_proofs}.

\section{Fairness analysis in pruning: Roadmap}
\label{sec:pruning_impact}

To gain insights on how pruning may introduce unfairness, we start with providing a useful upper bound for a group's excessive loss. Its goal is to isolate key aspects of model pruning that are responsible for the observed unfairness. The following discussion assumes the loss function $\ell(\cdot)$ to be at least twice differentiable, which is the case for common ML loss functions, such as mean squared error or cross entropy loss.
\begin{theorem}
\label{thm:taylor} 
The \emph{excessive loss} of a group $a \in \cA$ is upper bounded by\footnote{
  With a slight abuse of notation, the results refer to $\bar{\btheta}$ as the homonymous vector which is extended with $k-\bar{k}$ zeros.
}: 
\begin{align}
  R(a)  \leq 
  \left\| \bm{g}_a^{\ell} \right\|
  \times \left\|  \bar{\btheta} - \optimal{\btheta}\right\|
  + 
  \frac{1}{2} \lambda \left( \bm{H}_{a}^{\ell} \right) 
  \times 
  \left\| \bar{\btheta} - \optimal{\btheta}\right\|^2 
  + 
  \cO\left( \left\|\bar{\btheta} - \optimal{\btheta} \right\|^3 \right),
  \label{eq:thm1}
\end{align}
where 
$\bm{g}_a^\ell = \nabla_{\btheta} J( \optimal{\btheta}; D_{a})$ is the vector of gradients associated with the loss function $\ell$ evaluated at $\optimal{\btheta}$ and computed using group data $D_a$,  
$\bm{H}_{a}^{\ell} = 
\nabla^2_{\btheta} J(\optimal{\btheta}; D_{a})$ is the Hessian matrix of the loss function $\ell$, at the optimal parameters vector $\optimal{\btheta}$, computed using the group data $D_a$ (henceforth simply referred to as \emph{group hessian}), and  
$\lambda(\Sigma)$ is the maximum eigenvalue of a matrix $\Sigma$.
\end{theorem}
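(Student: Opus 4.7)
The plan is to carry out a second-order Taylor expansion of the group loss $J(\,\cdot\,; D_a)$ around the optimal parameters $\optimal{\btheta}$, and then apply two standard inequalities — Cauchy--Schwarz on the linear term and the Rayleigh quotient bound on the quadratic term — to reach the stated form. Because $\ell$ is assumed at least twice differentiable, so is the empirical risk $J(\,\cdot\,; D_a) = \frac{1}{|D_a|}\sum_{i \in D_a} \ell(f_\btheta(\bm{x}_i), y_i)$, which justifies the expansion with an $\cO(\|\bar{\btheta}-\optimal{\btheta}\|^3)$ remainder.

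Concretely, the first step is to write
\begin{align*}
J(\bar{\btheta}; D_a)
= J(\optimal{\btheta}; D_a)
+ \bm{g}_a^{\ell\,\top}(\bar{\btheta} - \optimal{\btheta})
+ \tfrac{1}{2}(\bar{\btheta} - \optimal{\btheta})^\top \bm{H}_a^{\ell}\, (\bar{\btheta} - \optimal{\btheta})
+ \cO\!\left(\|\bar{\btheta}-\optimal{\btheta}\|^3\right),
\end{align*}
using the definitions $\bm{g}_a^\ell = \nabla_\btheta J(\optimal{\btheta}; D_a)$ and $\bm{H}_a^\ell = \nabla^2_\btheta J(\optimal{\btheta}; D_a)$ given in the statement. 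Subtracting $J(\optimal{\btheta}; D_a)$ from both sides directly yields $R(a)$ on the left-hand side, so it remains only to upper-bound the linear and quadratic contributions on the right.

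For the linear term, Cauchy--Schwarz gives $\bm{g}_a^{\ell\,\top}(\bar{\btheta} - \optimal{\btheta}) \leq \|\bm{g}_a^\ell\|\cdot\|\bar{\btheta}-\optimal{\btheta}\|$, which matches the first summand in \eqref{eq:thm1}. For the quadratic term, I would use the fact that $\bm{H}_a^\ell$ is symmetric (Clairaut's theorem, since $\ell$ is $C^2$), so that its Rayleigh quotient is bounded above by the largest eigenvalue: $\bm{v}^\top \bm{H}_a^\ell \bm{v} \leq \lambda(\bm{H}_a^\ell)\,\|\bm{v}\|^2$ for every $\bm{v}$. Applying this with $\bm{v} = \bar{\btheta}-\optimal{\btheta}$ produces the second summand. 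Combining the two bounds with the cubic remainder yields the claim.

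The only real subtlety I expect is the handling of the notational mismatch pointed out in the footnote: $\bar{\btheta}$ has dimension $\bar{k} < k$, but to make sense of the difference $\bar{\btheta}-\optimal{\btheta}$ and to evaluate $J$ at $\bar{\btheta}$ via the same parameterization, one must identify $\bar{\btheta}$ with its zero-padded lift in $\RR^k$. Once this identification is made, the Taylor expansion applies verbatim at $\optimal{\btheta}$ along the pruning direction, and the rest is routine. I do not anticipate any technical obstacle beyond this bookkeeping, since the bound is purely a consequence of Taylor's theorem combined with two elementary norm inequalities.
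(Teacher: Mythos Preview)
Your proposal is correct and follows essentially the same route as the paper: a second-order Taylor expansion of $J(\,\cdot\,;D_a)$ around $\optimal{\btheta}$, Cauchy--Schwarz on the linear term, and the Rayleigh quotient bound on the quadratic term, combined with the cubic remainder. The only addition you make beyond the paper's argument is explicitly noting the symmetry of $\bm{H}_a^\ell$ via Clairaut's theorem to justify the Rayleigh bound, which is a welcome clarification.
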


The bound above follows from a second order Taylor expansion of the loss function, Cauchy-Schwarz inequality, and properties of the Rayleigh quotient.

Notice that, in addition to the difference in the original and pruned parameters vectors, two key terms appear in Equation \eqref{eq:thm1}: 
{\bf (1)} The norms of the gradients $\bm{g}_a^\ell$ and {\bf (2)} the maximum eigenvalue of the Hessian matrix $\bm{H}_a^\ell$ for a group $a$. Informally, the former is associated with the groups' local optimality while the latter relates to the ability of the model to separate the groups data. 
As we will show next these components represent the main sources of unfairness due to model pruning.

The following is an important corollary of Theorem \ref{thm:taylor}. It shows that the larger the pruning, the larger will be the excessive loss for a given group.

\begin{corollary}
\label{cor:1}
Let $\bar{k}$ and $\bar{k}'$ be the size of parameter vectors $\bar{\btheta}$ and $\bar{\btheta}'$, respectively, resulting from pruning model $f_{\optimal{\btheta}}$, where $\bar{k} < \bar{k}'$ (i.e., the former model prunes more weight than the latter one). Then, for any group $a \in \cA$, 
\begin{equation}
  \tilde{R}(a, \bar{\btheta}) \geq \tilde{R}(a, \bar{\btheta}'),
\end{equation}
where $\tilde{R}(a, \bm{\omega})$ is the excessive loss upper bound computed using pruned model parameters $\bm{\omega}$ (Eq.~\eqref{eq:thm1}).
\end{corollary}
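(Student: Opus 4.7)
The plan is to view the upper bound $\tilde{R}(a,\bm{\omega})$ as depending on $\bm{\omega}$ only through the scalar distance $d(\bm{\omega}) = \|\bm{\omega} - \optimal{\btheta}\|$, and then to verify two monotonicity facts: (i) $\tilde{R}(a,\cdot)$, as a function of $d$, is non-decreasing, and (ii) more aggressive pruning (smaller $\bar{k}$) yields a larger $d$. Combining (i) and (ii) immediately gives the claim.

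For step (i), rewrite the bound from Theorem~\ref{thm:taylor} as
\[
\phi_a(d) \;=\; \|\bm{g}_a^{\ell}\|\, d \;+\; \tfrac{1}{2}\,\lambda(\bm{H}_a^{\ell})\, d^2 \;+\; \mathcal{O}(d^3).
\]
The linear coefficient $\|\bm{g}_a^{\ell}\|$ is non-negative as a norm. The quadratic coefficient $\lambda(\bm{H}_a^{\ell})$ is non-negative under the standard assumption that $\optimal{\btheta}$ is a (local) minimum of the empirical risk, so that $\bm{H}_a^{\ell}$ is positive semi-definite and its maximum eigenvalue is $\geq 0$. Hence every coefficient in the polynomial expression of $\phi_a$ is non-negative, so $\phi_a$ is non-decreasing on $[0,\infty)$.

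For step (ii), recall from the problem setting that pruning zeros out the entries of $\optimal{\btheta}$ of smallest magnitude. Let $S, S' \subseteq [k]$ be the index sets of coordinates zeroed out to form $\bar{\btheta}$ and $\bar{\btheta}'$, respectively, so $|S| = k-\bar{k}$ and $|S'| = k-\bar{k}'$, with $|S| > |S'|$ by assumption. Since both $S$ and $S'$ consist of the smallest-magnitude coordinates of the same vector $\optimal{\btheta}$, we have $S' \subseteq S$ (taking any consistent tie-breaking rule). Using the convention that pruned coordinates are replaced by zeros,
\[
\|\bar{\btheta} - \optimal{\btheta}\|^2 \;=\; \sum_{i \in S} (\optimal{\theta}_i)^2 \;\geq\; \sum_{i \in S'} (\optimal{\theta}_i)^2 \;=\; \|\bar{\btheta}' - \optimal{\btheta}\|^2,
\]
so $d(\bar{\btheta}) \geq d(\bar{\btheta}')$. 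Applying $\phi_a$ to both sides and invoking (i) yields $\tilde{R}(a,\bar{\btheta}) = \phi_a(d(\bar{\btheta})) \geq \phi_a(d(\bar{\btheta}')) = \tilde{R}(a,\bar{\btheta}')$.

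The main obstacle, as I see it, is purely one of presentation rather than substance: making explicit the two implicit hypotheses the proof relies on — namely, that $\optimal{\btheta}$ is a local minimum (so the Hessian's top eigenvalue is non-negative and the quadratic term of $\phi_a$ is monotone in $d$), and that the pruning criterion is consistent across $\bar{k}$ and $\bar{k}'$ (so that the smaller retained set is nested inside the larger one, justifying $S' \subseteq S$). A minor technical nuisance is the $\mathcal{O}(d^3)$ remainder; since the statement concerns the bound itself rather than the excessive loss, one simply treats $\tilde{R}$ as the polynomial on the right-hand side of \eqref{eq:thm1}, and its term-by-term monotonicity in $d$ suffices.
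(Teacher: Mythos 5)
Your argument is correct and is essentially the reasoning the paper intends: the paper actually states Corollary~\ref{cor:1} without a written proof, and your two monotonicity steps --- the bound of Theorem~\ref{thm:taylor} is non-decreasing in $d=\|\bar{\btheta}-\opttheta\|$ because its coefficients are non-negative, and magnitude pruning makes $d$ non-decreasing in the number of pruned weights via the nesting $S'\subseteq S$ of the zeroed coordinate sets --- are exactly what the surrounding text gestures at. One small correction to your justification of step (i): local minimality of the \emph{full} empirical risk only guarantees that the aggregate Hessian $\sum_{a}\frac{|D_a|}{|D|}\bm{H}_a^{\ell}$ is positive semi-definite, not each group Hessian $\bm{H}_a^{\ell}$ individually, so the needed fact $\lambda(\bm{H}_a^{\ell})\ge 0$ is an additional (mild) assumption --- it fails only if $\bm{H}_a^{\ell}$ is negative definite --- rather than a consequence of optimality; the paper is equally silent on this point, as it is on the sign of the $\mathcal{O}(d^3)$ remainder that you rightly flag.
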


The corollary above indicates that the excess risk for a group increases as the pruning regime increase. Building on this result, the paper illustrates next why unfairness can become more significant as the pruning regime increases.

The next sections analyze the effect of gradient norms and the Hessian to unfairness in the pruned models. The theoretical claims are supported and complemented by analytical results. These results use the UTKFace dataset \cite{zhifei2017cvpr}
for a vision task whose goal is to classify ethnicity. The experiments use a ResNet-18 architecture and the pruning counterparts remove the $P\%$  parameters with the smallest absolute values for various $P$. All reported metrics are normalized and an average of 10 repetitions. 
While the theoretical analysis focuses on the notion of disparate impacts under the lens of excessive loss, the empirical results report differences in accuracy of the resulting models. The empirical results thus reflect the setting commonly adopted when measuring accuracy parity \cite{zhao2019inherent}.

We report a glimpse of the empirical results, with the purpose of supporting the theoretical claims, and extended experiments, as well as additional descriptions of the datasets and settings, are reported in Appendix \ref{sec:additional_results}.

\section{Why disparity in groups' gradients causes unfairness?}
\label{sec:grad_analysis}

This section analyzes the effect of gradients norms on the unfairness 
observed in the pruned models.
In more detail, it shows that unbalanced datasets result in a model with 
large differences in gradient norms between groups (Proposition 
\ref{thm:grad_imbalance}), it connects gradients norms for a group with 
the resulting model errors in such a group (Proposition 
\ref{thm:acc_vs_grad}), and connects these concepts with the excessive 
loss (Theorem \ref{thm:taylor}) 
to show that unfairness in model pruning is largely controlled by the 
difference in gradient norms among groups.

\begin{wrapfigure}[9]{r}{145pt} 
    \centering
    \vspace{-20pt}
    \includegraphics[width=\linewidth]{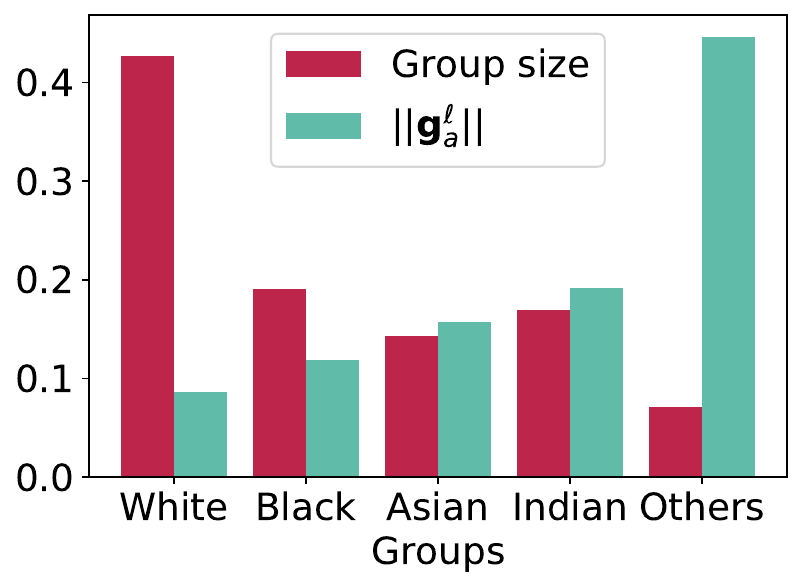}
    \vspace{-18pt}
    \caption{\small Group size vs.~gradient norms.}
    \label{fig:size_vs_grad}
\end{wrapfigure}
\paragraph{Gradient norms and group sizes.}
The section first shows that imbalanced datasets lead a model to have 
imbalanced gradient norms across groups. The following result assumes 
that the training converges to a local minima. 

\begin{proposition}
\label{thm:grad_imbalance} 
Consider two groups $a$ and $b$ in $\cA$ with $|D_a| \geq |D_b|$. Then 
\(
\left\| \bm{g}_a^\ell \right\|  \leq  \left\| \bm{g}_b^\ell \right\|.
\)
\end{proposition}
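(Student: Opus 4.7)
The plan is to invoke the first-order optimality condition at the learned minimizer and decompose the full-dataset gradient as a size-weighted sum of per-group gradients. Since the empirical risk in Eq.~\eqref{eq:erm} can be written group-wise as
\[
J(\btheta; D) \;=\; \frac{1}{n}\sum_{c\in\cA}\sum_{i\in D_c}\ell(f_\btheta(\bm{x}_i),y_i) \;=\; \frac{1}{n}\sum_{c\in\cA} |D_c|\, J(\btheta; D_c),
\]
differentiating at $\optimal{\btheta}$ and using $\nabla_\btheta J(\optimal{\btheta}; D)=0$ (the stated local-minimum assumption) yields the identity $\sum_{c\in\cA} |D_c|\, \bm{g}_c^\ell = \bm{0}$. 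This is the sole ingredient needed to link group sizes to group gradient norms.

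In the two-group setting $\cA=\{a,b\}$, the identity specializes to $|D_a|\,\bm{g}_a^\ell = -|D_b|\,\bm{g}_b^\ell$. Taking Euclidean norms on both sides gives $|D_a|\,\|\bm{g}_a^\ell\| = |D_b|\,\|\bm{g}_b^\ell\|$, so $\|\bm{g}_a^\ell\| = (|D_b|/|D_a|)\,\|\bm{g}_b^\ell\| \leq \|\bm{g}_b^\ell\|$ whenever $|D_a|\geq |D_b|$, proving the claim. The intuition is that the larger group dominates the aggregate objective and is therefore driven closer to a stationary point in its own loss landscape than the smaller group, an observation that is also consistent with Figure~\ref{fig:size_vs_grad}.

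For the general case $|\cA|>2$, the scalar identity becomes $|D_a|\bm{g}_a^\ell = -\sum_{c\neq a}|D_c|\bm{g}_c^\ell$, and the triangle inequality alone produces only $|D_a|\|\bm{g}_a^\ell\|\leq \sum_{c\neq a}|D_c|\|\bm{g}_c^\ell\|$, which is not strong enough to compare $\|\bm{g}_a^\ell\|$ with $\|\bm{g}_b^\ell\|$ individually. I would handle this by lumping the remaining samples $\cup_{c\notin\{a,b\}}D_c$ into a single auxiliary group, effectively reducing to a binary decomposition on the pair of interest, or alternatively by restricting the statement to the binary-attribute setting used in the paper's empirical study.

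The main obstacle is precisely this passage from one scalar stationarity equation to a pointwise comparison of two specific groups: the condition pins down only one linear combination of the $\bm{g}_c^\ell$, so any strengthening to a pairwise norm inequality must come either from a genuine binary restriction or from a mild structural assumption (e.g., that the remaining groups' weighted gradients contribute a common direction so that isolating the pair $(a,b)$ is justified). Everything else in the argument is a direct application of linearity of the gradient, the minimizer condition, and the triangle/Cauchy--Schwarz-type manipulation.
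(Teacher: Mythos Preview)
Your argument is correct and matches the paper's proof essentially line for line: both invoke the stationarity condition $\nabla_\btheta J(\opttheta;D)=\bm{0}$, decompose the full gradient as a size-weighted sum of group gradients, specialize to two groups to obtain $|D_a|\bm{g}_a^\ell=-|D_b|\bm{g}_b^\ell$, and take norms. Your additional remark that the step fails for $|\cA|>2$ without further assumptions is accurate; the paper's own proof silently collapses the sum $\sum_{c\in\cA}$ to just the two terms for $a$ and $b$, so it too is implicitly a two-group argument.
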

That is, groups with more data samples will result in smaller gradients 
norms than groups with fewer data samples and vice-versa. Figure \ref{fig:size_vs_grad} illustrates
Proposition \ref{thm:grad_imbalance}. The plot shows the relation between
groups sizes $|D_a|$ and their associated gradient 
norms $\|\bm{g}_a^\ell \|$ on the UTK dataset and settings described above. Notice the strong trend between decreasing group sizes and increasing gradient norms for such groups.
These theoretical considerations can be used to explain why underrepresented groups are often subject to larger performance impacts after network pruning \cite{Hooker2020CharacterisingBI}. These groups tend to exhibit large gradient norms at convergence, relative to other groups, thus, by  Theorem \ref{thm:taylor}, they are also subject to larger excessive losses due to pruning.

\paragraph{Gradient norms and accuracy.} 
Next, the section shows a strong connection between the gradient norms of a group and its associated accuracy. The following assumes the models adopt a cross entropy loss (or mean squared error for regression tasks, as shown Appendix \ref{sec:missing_proofs}). 

\begin{proposition}
\label{thm:acc_vs_grad}
For a given group $a \in \cA$, gradient norms can be upper bounded as:
\[
    \|\bm{g}_a^\ell \| \in
    \cO\left( 
    \sum_{(\bm{x}, y) \in D_a} 
    \underbrace{\|f_{\opttheta}(\bm{x}) - y \|}_{\textit{Error}}
    \times
    \left\| \nabla_{\btheta} f_{\opttheta}(\bm{x}) \right\|
    \right).
\]
\end{proposition}
The above relates gradient norms with an error measure of the classifier 
to a target label multiplied by the gradient of the predictions. 
For example, in a classification task with cross entropy loss, 
$\ell(f_{\btheta}(\bm{x}), y) = - \sum_{z \in \cY} f^z_{\btheta}
(\bm{x}) \bm{y}^z$, where $f_{\btheta}^z(\bm{x})$ represents the $z$-th element of the output associated with the soft-max 
layer of model $f_\btheta$, and $\bm{y}$ is a one-hot encoding of the true label $y$, with $\bm{y}^z$ representing its $z$-th element, then,
\begin{align*}
\hspace{-50pt}
    \| \bm{g}_a\|  
    &= \left\| \nabla_{\btheta} J(\btheta; D_a,) \right\| 
    = \left\| \nicefrac{1}{|D_a|} \sum_{(\bm{x},y) \in D_{a}} 
    \nabla_{f} \ell(f_{\btheta}(\bm{x}), y) \times 
    \nabla_{\btheta} f_{\btheta}(\bm{x}) \right\|\\
\hspace{-50pt}    
    &= \left\| \nicefrac{1}{|D_a|} \sum_{(\bm{x}, y) \in D_a} 
    (f_{\btheta}(\bm{x}) - \bm{y}) \times  
    \nabla_{\btheta} f_{\btheta}(\bm{x}) \right\|\\
\hspace{-50pt}    
    &\leq \nicefrac{1}{|D_a|}\sum_{(\bm{x}, y) \in D_a}  
    \left\| f_{\btheta}(\bm{x}) - \bm{y}  \right\| \times 
    \left\| \nabla_{\btheta} f_{\btheta}(\bm{x}) \right\|.
\end{align*}
\begin{wrapfigure}[4]{r}{145pt} 
    \vspace{-80pt}
    \centering
    \includegraphics[width=\linewidth]{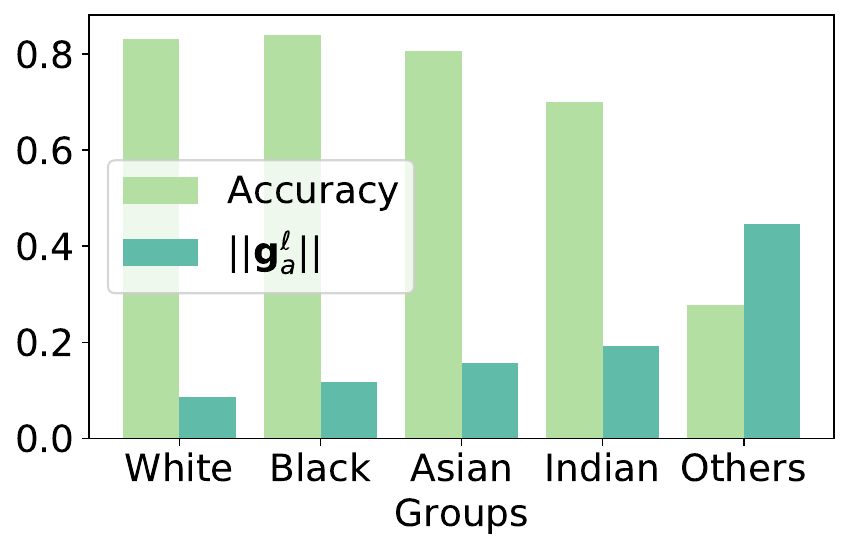}
    \vspace{-16pt}
    \caption{\small Accuracy vs. gradient norms.}
    \label{fig:acc_vs_norms}
\end{wrapfigure}
A similar observation holds for mean square error loss, as illustrated 
in Appendix \ref{sec:missing_proofs}.
The observation above sheds light on the correlation between the prediction error of a group and its model gradients. 
This relation is emphasized in Figure~\ref{fig:acc_vs_norms}, which 
illustrates that the gradient norm for a given group increases as its prediction accuracy decreases.\!

Proposition \ref{thm:acc_vs_grad} allows us to link the gradient norms with the group accuracy of the resulting model, which, together with the result above will be useful to reason about the impact of gradient norms on the disparities in the group excessive losses.

\paragraph{The role of gradient norms in pruning.}
Having highlighted the connection between  gradients norms of a group 
with the accuracy of the pruned model on such a group, this section provides theoretical intuitions on the role of gradient norms in the disparate group losses during pruning. 

From Theorem \ref{thm:taylor}, notice that the excessive loss is controlled 
by term $\|\bm{g}_a^\ell \| \times \| \bartheta - \opttheta \|$. As already 
noted in Corollary \ref{cor:1}, the term $\| \bartheta - \opttheta \|$
regulates the impact of pruning on the excessive loss, as the difference between the pruned and non-pruned parameters vectors directly depends on the pruning rate. For a fixed pruning rate, however, notice that 
groups with different gradient norms will have a disparate effect on 
the resulting term. In particular, groups with very small gradients norms 
(those generally associated with highly accurate predictions) will be 
less sensitive to the effects of the pruning rate. Conversely, groups with large gradient norms will be affected by the pruning rate to a greater extent, with larger pruning rates, \emph{typically} reflecting in larger excessive losses.

\begin{figure}[!t]
\centering 
\begin{subfigure}[t]{0.28\textwidth}
\centering
\includegraphics[width=\textwidth]{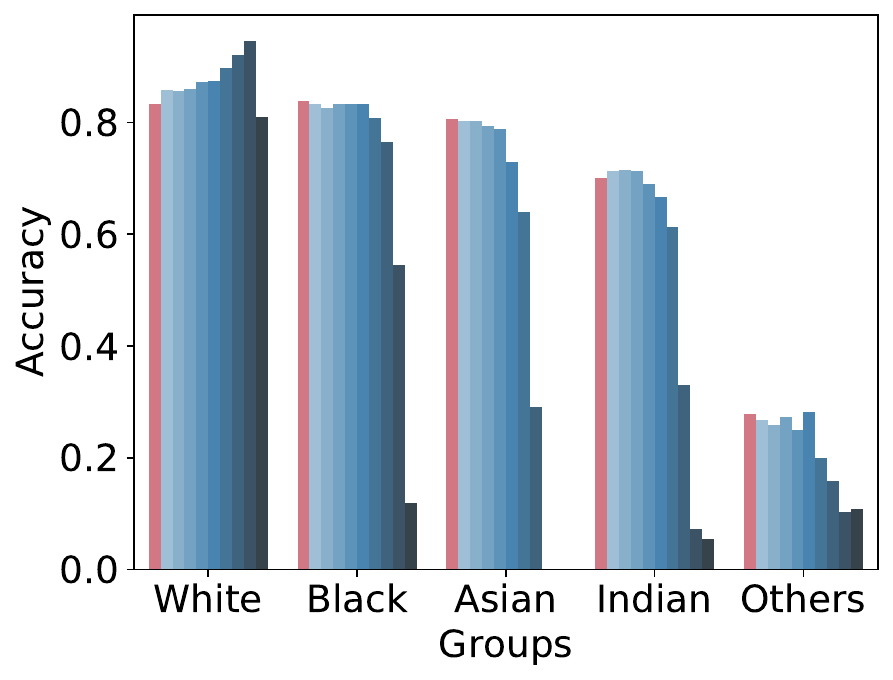}
\caption{Accuracy}
\label{fig:global_pruning_accuracy_avg}
\end{subfigure}
\hfill
\begin{subfigure}[t]{0.275\textwidth}
\centering
\includegraphics[width=\textwidth]{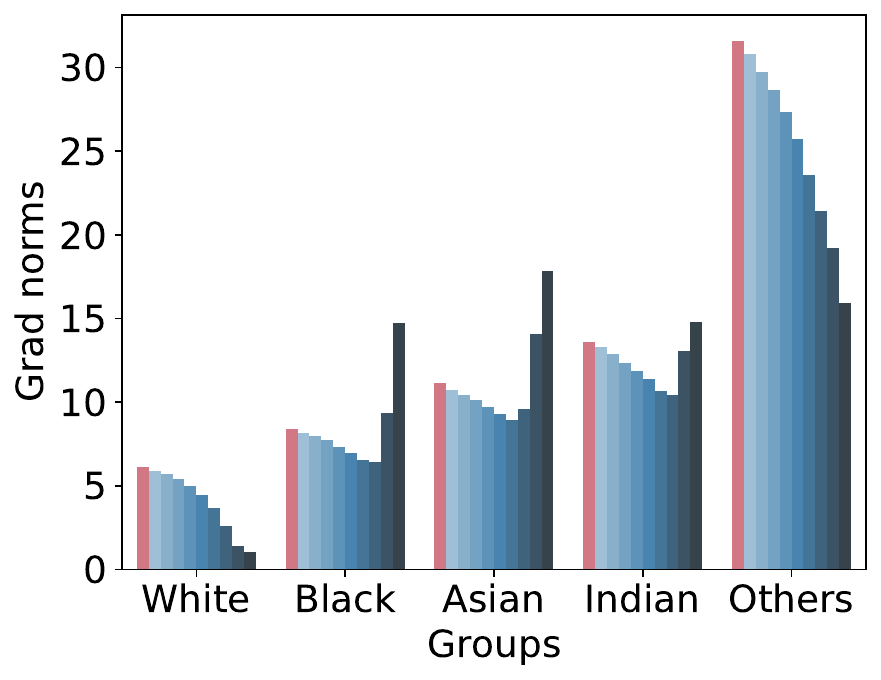}
\caption{Gradient Norm: $\|\bm{g}_a^\ell\|$}
\label{fig:nonnormal_GN}
\end{subfigure}
\hfill
\begin{subfigure}[t]{0.425\textwidth}
\centering
\includegraphics[width=\textwidth]
 {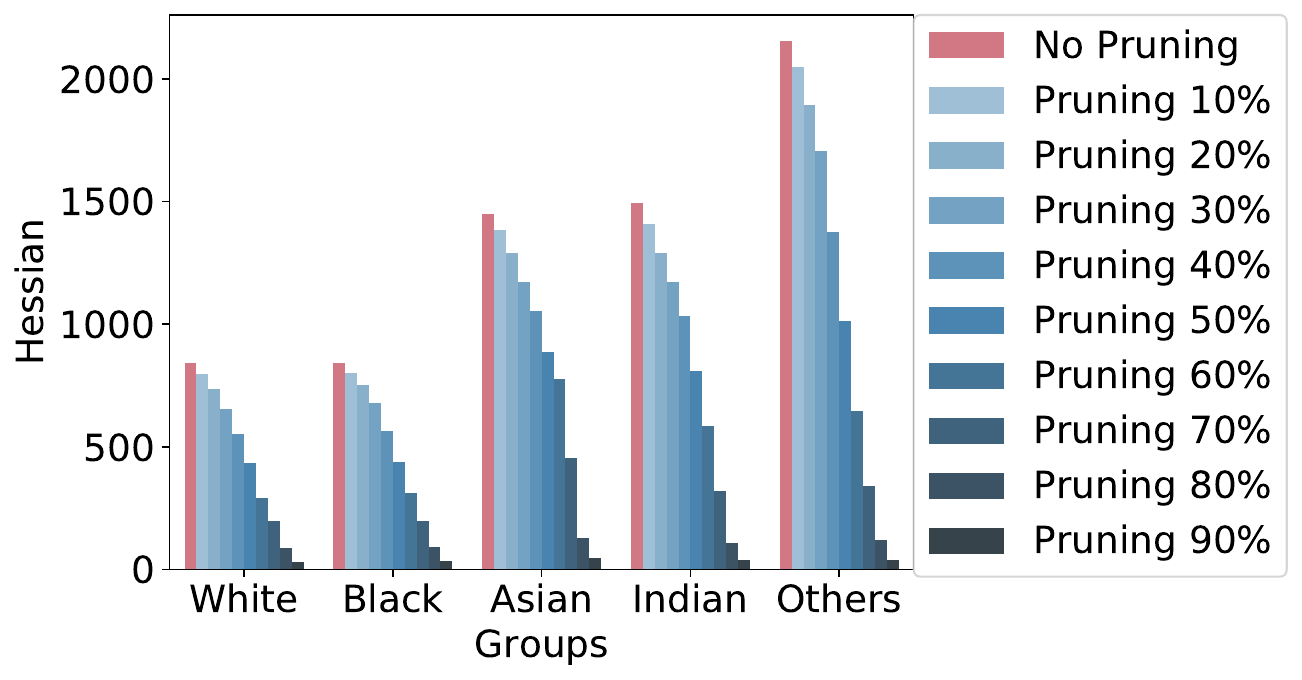}
\caption{Group Hessian: $\lambda(\bm{H}_a^\ell)$}
\label{fig:nonnormal_hessian}
\end{subfigure}
\caption{Accuracy, gradient norm, and group Hessian max eigenvalues of each ethnicity group, before and after increasing pruning ratios for UTK-Face dataset. The percentage of data samples across groups \textit{White, 
 Black, Asian, Indian, and Others} is $\sim 0.42, 0.19, 0.15, 0.15, 0.07$,
 respectively.}
\label{Fig:main_fig}
\end{figure}
These observations of the factors of disparity, accuracy, and group size,
can also be appreciated empirically in 
Figures~\ref{fig:global_pruning_accuracy_avg} and \ref{fig:nonnormal_GN}. 
The plots report accuracy (a) and gradient norms (b) on the UTKFace datasets for a variety of pruning rates. Consider group {\sl White} (containing 42\% of the total samples) and {\sl Others} (containing 7\% of the total samples). The unpruned model 
has high accuracy on the former group and small gradient norms. The accuracy of this group is insensitive to various pruning rates and even increases 
at large pruning regimes. 
In contrast, group {\sl Others} has much lower accuracy and larger 
gradient norms in the unpruned model. As the pruning rate increase, their accuracies drastically drop. As a result, in high pruning regimes, this minority group exhibits poor accuracy and very high gradient norms.

Notice that the empirical results apply to much more complex settings
than those which can be analyzed formally, thus they complement the theoretical observations.

\section{Why disparity in groups' Hessians causes unfairness?}
\label{sec:hessian_analysis}

Having examined the properties of the groups gradients and their 
relation to unfairness in pruning, this section turns on analyzing 
how the 
Hessian associated with the loss function for a group is linked to the unfairness observed during pruning. In more detail, it connects the groups' Hessian to the distance to the decision boundary for the samples in that group and their resulting model errors (Theorem \ref{thm:hessian_norm_bound}), it illustrates a strong positive correlation between groups' Hessian and gradient norms, and links these concepts with the excessive loss (Theorem \ref{thm:taylor}) to show that unfairness in model pruning is controlled by the difference in maximum eigenvalues of the Hessians among groups.

\paragraph{Group Hessians and accuracy.}
The section first shows that groups presenting large Hessian values may suffer larger disparate impacts due to pruning, when compared with groups that have smaller Hessians. It does so by connecting the maximum eigenvalues of the groups Hessians with their distance to decision boundary and the group accuracy. The following result sheds light on these observations. It restricts its attention to models trained under binary cross entropy losses, for clarity of explanation, although an extension to a multi-class case is directly attainable. 
\begin{theorem}
\label{thm:hessian_norm_bound} 
Let $f_\theta$ be a binary classifier trained using a binary cross entropy loss. For any group $a \in \cA$, the maximum eigenvalue of the group Hessian $\lambda(\bm{H}_a^{\ell})$ can be upper bounded by:
\begin{equation}
    \lambda(\bm{H}_a^{\ell}) \leq \frac{1}{|D_a|}
    \sum_{(\bm{x}, y)\in D_a}
    \underbrace{
        \left( {f}_{\opttheta}(\bm{x}) \right) 
        \left( 1 - {f}_{\opttheta}(\bm{x}) \right)}_{\textit{Closeness to decision boundary}} 
        \times 
        \left\| \nabla_{\btheta} f_{\opttheta}(\bm {x}) \right\|^2 
     + 
         \underbrace{\left| f_{\opttheta}(\bm{x}) - y \right|}_{\textit{Error}} 
         \times 
        \lambda\left( \nabla^2_{\btheta} f_{\opttheta}(\bm{x}) \right). 
    \label{eq:hessian_norm_bound}
\end{equation}

\end{theorem}
The proof relies on derivations of the Hessian associated with model loss function and Weyl inequality. 
In other words, Theorem \ref{thm:hessian_norm_bound} highlights a direct connection between the maximum eigenvalue of the group Hessian and {\bf (1)} the closeness to the decision boundary of the group samples, and {\bf (2)} the accuracy of the group. The distance to the decision boundary is derived from \cite{cohen2019certified}. Intuitively this term is maximized when the classifier is highly uncertain about the prediction: ${f}_{\opttheta}(\bm{x}) \to 0.5$, and minimized when
it is highly certain ${f}_{\opttheta}(\bm{x}) \to 0$ or $1$, as showed in the following proposition.

\begin{proposition}
\label{prop:dist_bndry}
 Consider a binary  classifier $f_{\btheta}(\bm{x}) $. For a given sample $\bm{x} \in D $, the term ${f}_{\opttheta}(\bm{x}) (1 - {f}_{\opttheta}(\bm{x}) )$ is maximized when ${f}_{\opttheta}(\bm{x})  = 0.5$ and minimized when ${f}_{\opttheta}(\bm{x}) \in \{0,1\} $.
\end{proposition}

\begin{wrapfigure}[10]{r}{143pt} 
    \centering
    \vspace{-12pt}
    \includegraphics[width=\linewidth]{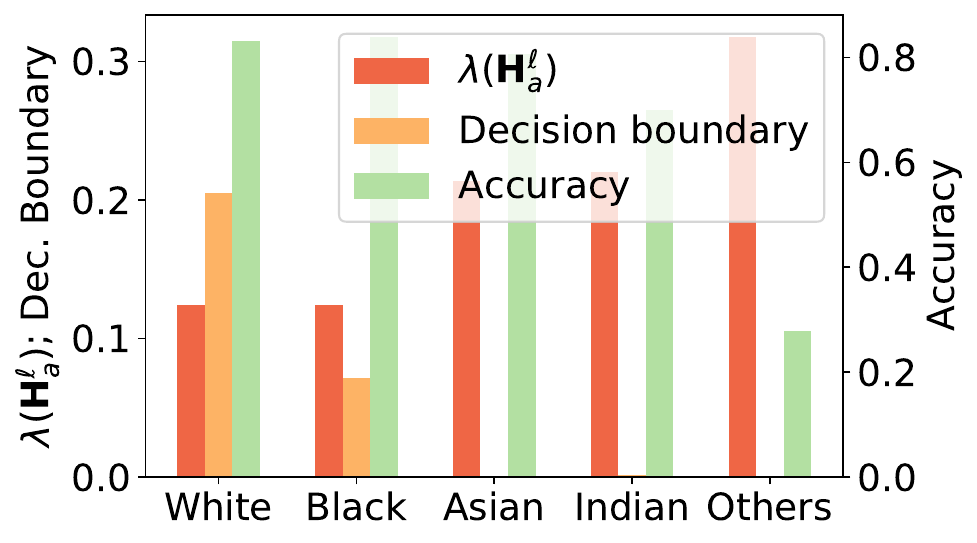}
    \vspace{-18pt}
    \caption{\small Group Hessians, distance to decision boundary, and~accuracy.}
    \label{fig:H_vs_acc}
\end{wrapfigure}
Observe that a group consisting of samples that are far from the decision boundary will have smaller Hessians and, thus, be less subject to a drop in accuracy due to model pruning. These results can be appreciated in Figure \ref{fig:H_vs_acc}. Notice the inverse relationship between maximum eigenvalues of the groups' Hessians and their average distance to the decision boundary. The same relation also holds for accuracy: the higher the Hessians maximum eigenvalues, the smaller the accuracy.
This is intuitive as samples which are close to the decision boundary
will be more prone to errors due to small changes in the model due 
to pruning, when compared with samples lying far from the decision boundary. 

\paragraph{Correlation between group Hessians and gradient norms.}
This section observes a positive correlation between maximum eigenvalues of the Hessian of a group and their gradient norms. This relation can be appreciated in Figure \ref{fig:H_vs_grad}. 
While mainly empirical, this observation is important as it illustrates that both the Hessian $\lambda(\bm{H}_a^\ell)$ and the gradient $\| \bm{g}_a^\ell \|$ terms appearing in the upper bound of the excessive loss $R(a)$ reported in Theorem \ref{thm:taylor} are in agreement. This relation was observed in all our experiments and settings.
Such observation allows us to infer that it is the combined 
effect of gradient norms and group Hessians that is responsible for 
the excessive loss of a group and, in turn, for the exacerbation of 
unfairness in the pruned models.

\begin{wrapfigure}[11]{r}{130pt} 
    \centering
    \vspace{-14pt}
    \includegraphics[width=\linewidth]{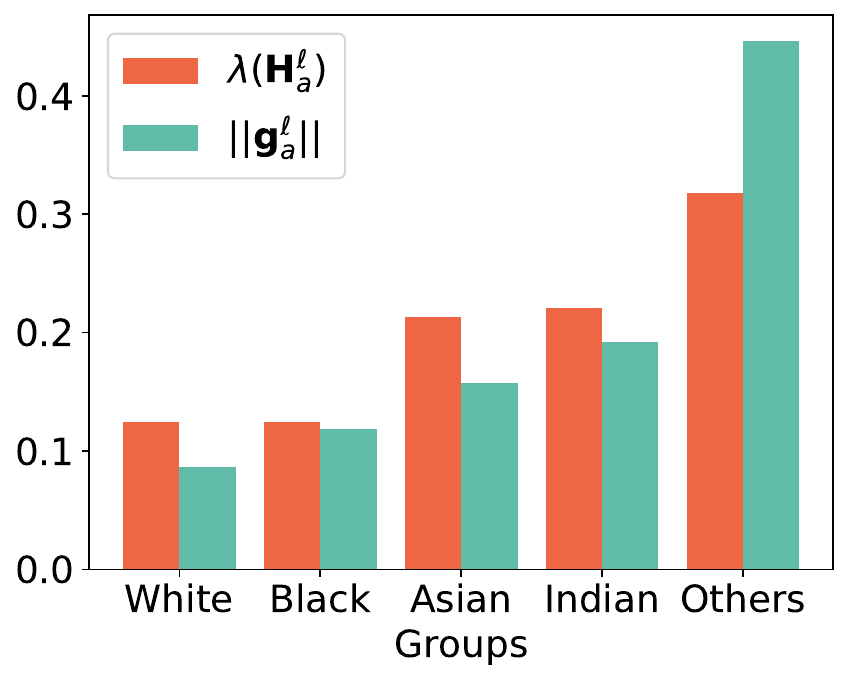}
    \vspace{-18pt}
    \caption{\small Group Hessians and~gradient norms.}
    \label{fig:H_vs_grad}
\end{wrapfigure}
\paragraph{The role of the group Hessian in pruning.}
 Having highlighted the connection between Hessian for a group with the resulting accuracy of the model on such a group, this section provides theoretical intuitions on the role of the Hessians in the disparate group losses during pruning. 

In Theorem \ref{thm:taylor}, notice that the excessive loss is controlled by term $\|\bm{H}_a^\ell \| \times \| \bartheta - \opttheta  \|^2$. 
As also noted in the previous section, the term $\| \bartheta - \opttheta  \|$
regulates the impact of pruning on the excessive loss as the difference between the pruned and non-pruned parameter vectors directly depends on the pruning rate. Similar to the observation for gradient norms, with a fixed pruning rate, groups with different Hessians will have a disparate effect on the resulting term. In particular, groups with small Hessians eigenvalues (those generally distant from the decision boundary and highly accurate) will be less sensitive to the effects of the pruning rate. Conversely, groups with large Hessians eigenvalues will be affected by the pruning rate to a greater extent, \emph{typically} resulting in larger excessive losses. 
These observations can further be appreciated empirically in Figures~\ref{fig:global_pruning_accuracy_avg} (for accuracy) and \ref{fig:nonnormal_hessian} (for maximum group Hessian eigenvalues) on the UTKFace datasets for a variety of pruning rates.

\section{Mitigation solution and evaluation}
\label{sec:mitigation}

The previous sections highlighted the presence of two key factors playing a role in the observed model accuracy disparities due to pruning: the difference in gradient norms, and the difference in Hessians losses across groups. This section first shows how to leverage these findings to provide a simple, yet effective solution to reduce the disparate impacts of pruning. Then, the section illustrates the benefits of this mitigating solution on a variety of tasks, datasets, and network architectures. 

\subsection{Mitigation solution} 
To achieve fairness, the aforementioned findings suggest to equalize the disparity associated with gradient norms $\| \bm{g}_a^\ell \|$ and Hessians
 $\lambda(\bm{H}_a^\ell)$ across different groups $a \in \cA$. For this goal, we adopt a constrained empirical risk minimization approach: 
\begin{align}
\label{eq:LD_slow}
    \minimize{\bm{\theta}}\;\; J(\bm{\theta}; D) 
    \quad
    \text{such that:}\;\; \| \bm{g}^\ell_a\| = \|\bm{g}^\ell\|, \;\;
    \lambda(\bm{H}_a^\ell) = \lambda(\bm{H}^\ell) \;\; \forall a \in \cA,
\end{align} 
where $\bm{g}^\ell = \nabla_\btheta J(\btheta; D)$ and $\bm{H}^\ell = \nabla^2_\btheta J(\btheta; D)$ refer to the gradients and Hessian associated with loss function $\ell$, respectively, and are computed using the whole dataset $D$. The approach \eqref{eq:LD_slow} is a common strategy adopted in fair learning tasks, and
the paper uses the Lagrangian Dual method of \citet{fioretto2020lagrangian} which exploits Lagrangian duality to extend the loss function with trainable and weighted regularization terms that encapsulate constraints violations (see Appendix \ref{sec:additional_results} for additional details). 

A shortcoming of this approach is, however, that requires computing the gradient norms and Hessian matrices of the group losses in each and every training iteration, rendering the process computationally unviable, especially for deep, overparametrized networks.

To overcome this computational burden, we will use two observations made earlier in the paper. First, recall the strong relation between gradient norms for a group and their associated losses. This aspect was noted in Proposition \ref{thm:acc_vs_grad}. That is, when the losses across the groups are similar, the gradient norms across such groups will also tend to be similar. Next, Theorem \ref{thm:hessian_norm_bound} noted a positive correlation between model errors (and thus loss values) for a group and its associated Hessian eigenvalues. Thus, when the losses across the groups are similar, the group Hessians will also tend to be similar. This intuition is also complemented by the strong correlation between group Hessians and gradient norms reported in Section \ref{sec:hessian_analysis}.
Based on the above observations, we propose a simpler version of the constrained minimizer \eqref{eq:LD_slow} defined as
\begin{align}
\label{eq:LD_fast}
    \minimize{\bm{\theta}}\;\; J(\bm{\theta}; D) 
    \quad
    \text{such that:}\;\; J(\bm{\theta}; D_a) = J(\bm{\theta}; D)  
    \;\; \forall a \in \cA,
\end{align}
that substitutes the gradient norms and max eigenvalues of group Hessians equality constraints with proxy terms capturing the group $J(\btheta; D_a)$ and population $J(\btheta; D)$ losses.

\begin{wrapfigure}[17]{r}{0.5\textwidth}
\vspace{-16pt}
\centering
\includegraphics[width=\linewidth]{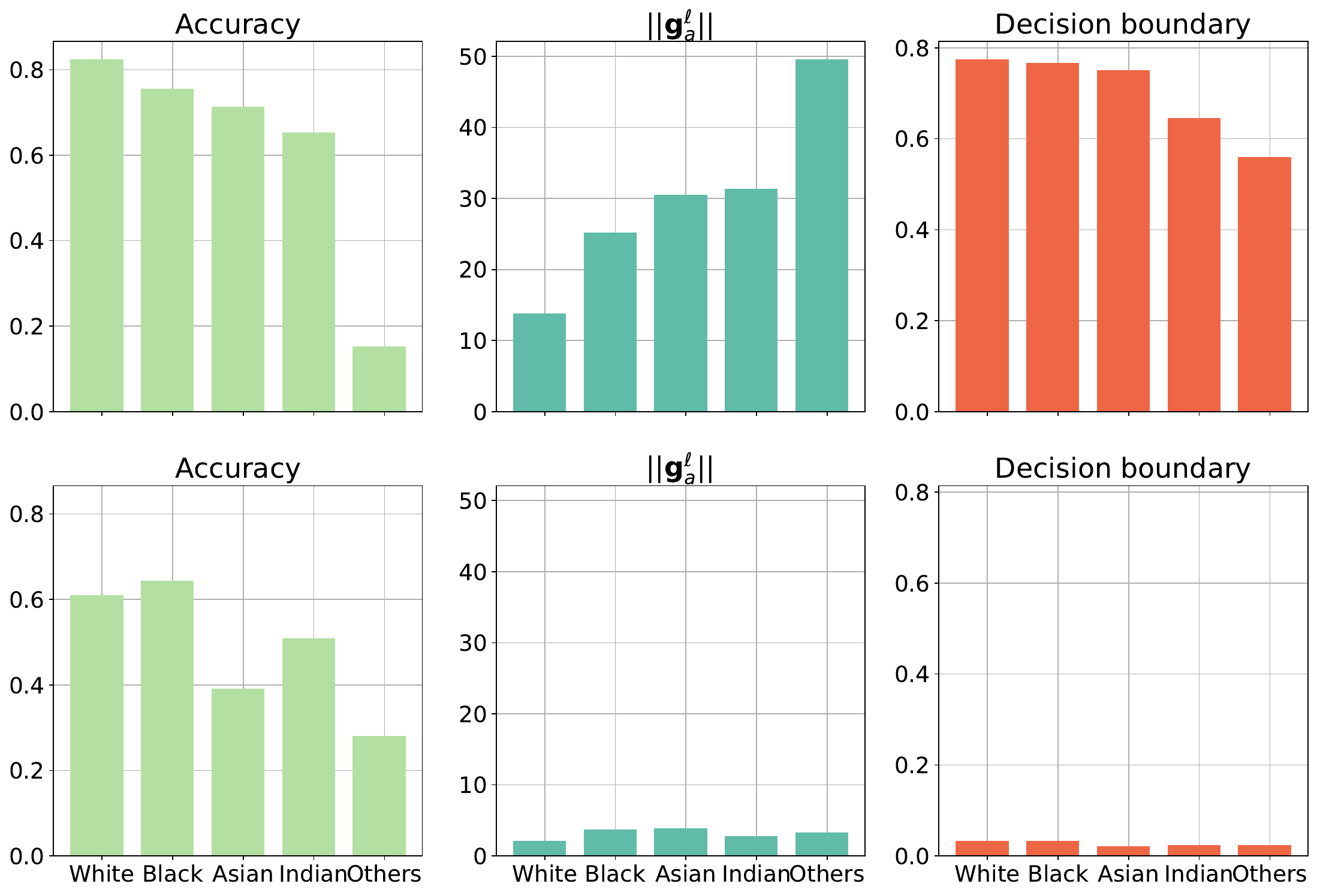}
\vspace{-12pt}
\caption{Effects of fairness constraints in balancing not only group accuracy (left) but also gradient norms (middle) and group average distance to the decision boundary (right).}
\label{fig:example_mitigation}
\end{wrapfigure}
The impact of such proxy terms in the fairness-constrained problem above can be appreciated, empirically, in Figure \ref{fig:example_mitigation}. The plots, that use the UTK-Face dataset, with Ethnicity as protected group, show an original unfair model (top) and a fair counterpart obtained through problem \eqref{eq:LD_fast} (bottom). 
Both top and bottom sub-figures use an unpruned model. The top sub-figure shows the performance of an original unpruned model trained by minimizing the empirical risk function while the bottom one shows the effect of solving Problem \eqref{eq:LD_fast}, i.e., it constrains the empirical risk function with the various group loss terms. 
Notice how enforcing balance in the group losses also helps reducing and balancing the gradient norms and group's  average distance to the decision boundary. As a consequence, the resulting model fairness is dramatically enhanced (bottom-left subplot).

\subsection{Assessment of the mitigation solution}\label{sec:AssmtMitigation}

\paragraph{Datasets, models, and settings.}
This section analyzes the results obtained using the proposed mitigation solution with ResNet50 and VGG19 on the UTKFace dataset \cite{zhifei2017cvpr}, CIFAR-10~\cite{cifar10}, and SVHN~\cite{Netzer2011ReadingDI} for various protected attributes. The experiments compare the following four models:

$\bullet$ {\sl No Mitigation}: it refers to the standard pruning approach which uses no fairness mitigation strategy.\\ 
$\bullet$ {\sl Fair Bf Pruning}: it applies the fairness mitigation process (Problem \eqref{eq:LD_fast}) exclusively to the original large network, thus \emph{before} pruning.\\
$\bullet$ {\sl Fair Aft Pruning}: it applies the mitigation exclusively to the pruned network, thus \emph{after} pruning. \\
$\bullet$ {\sl Fair Both}: it applies the mitigation both to the original large network and to the pruned network.

The experiments report the overall accuracy of resulting models as well as their fairness violations, defined here as the difference between the maximal and minimal group accuracy. The reported metrics are the average of $10$ repetitions. Additional details on datasets, architectures, and hyper-parameters adopted, as well as additional and extended results are reported in Appendix \ref{sec:additional_results}.

\begin{figure*}[t]
 \centering
\includegraphics[width=0.45\linewidth]
 {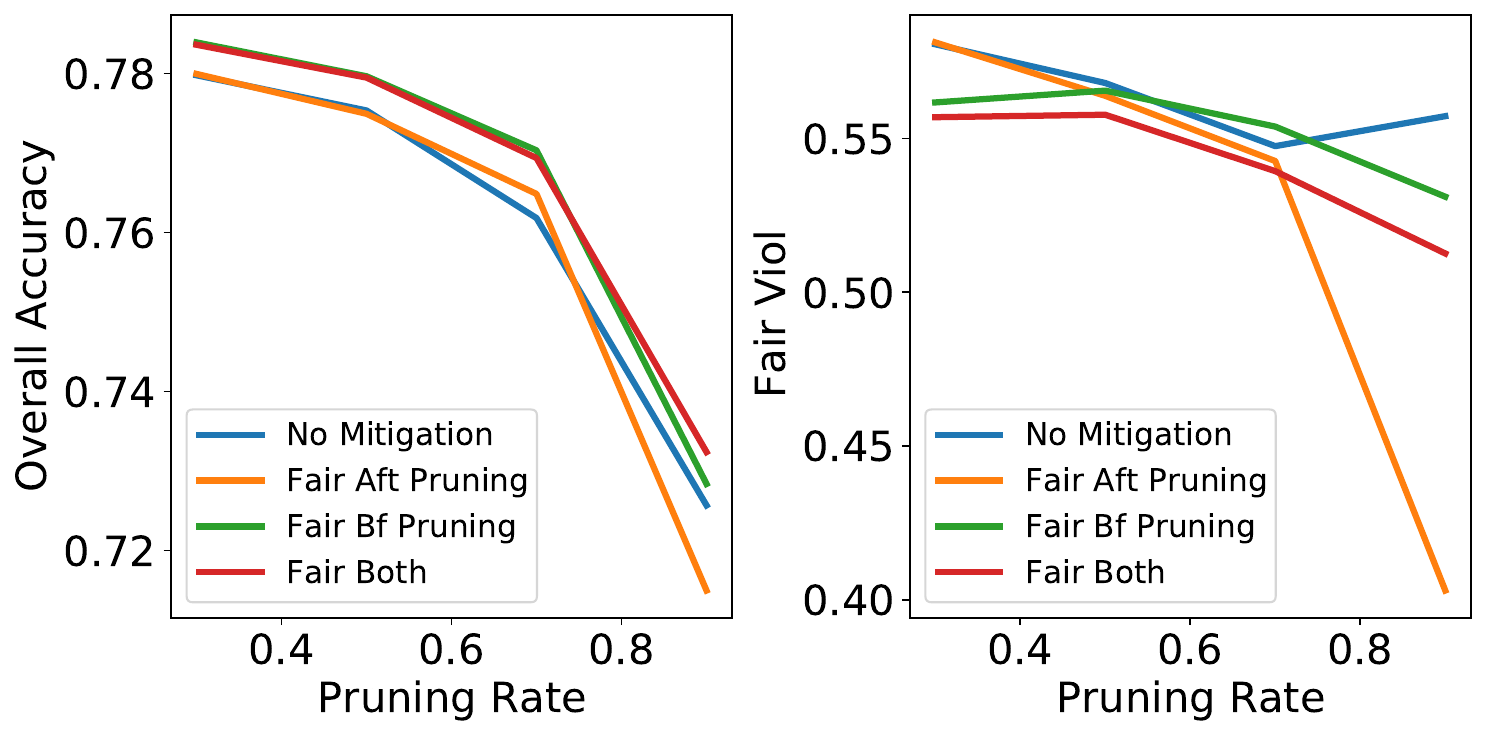}
 \hspace{20pt}
\includegraphics[width=0.45\linewidth]
 {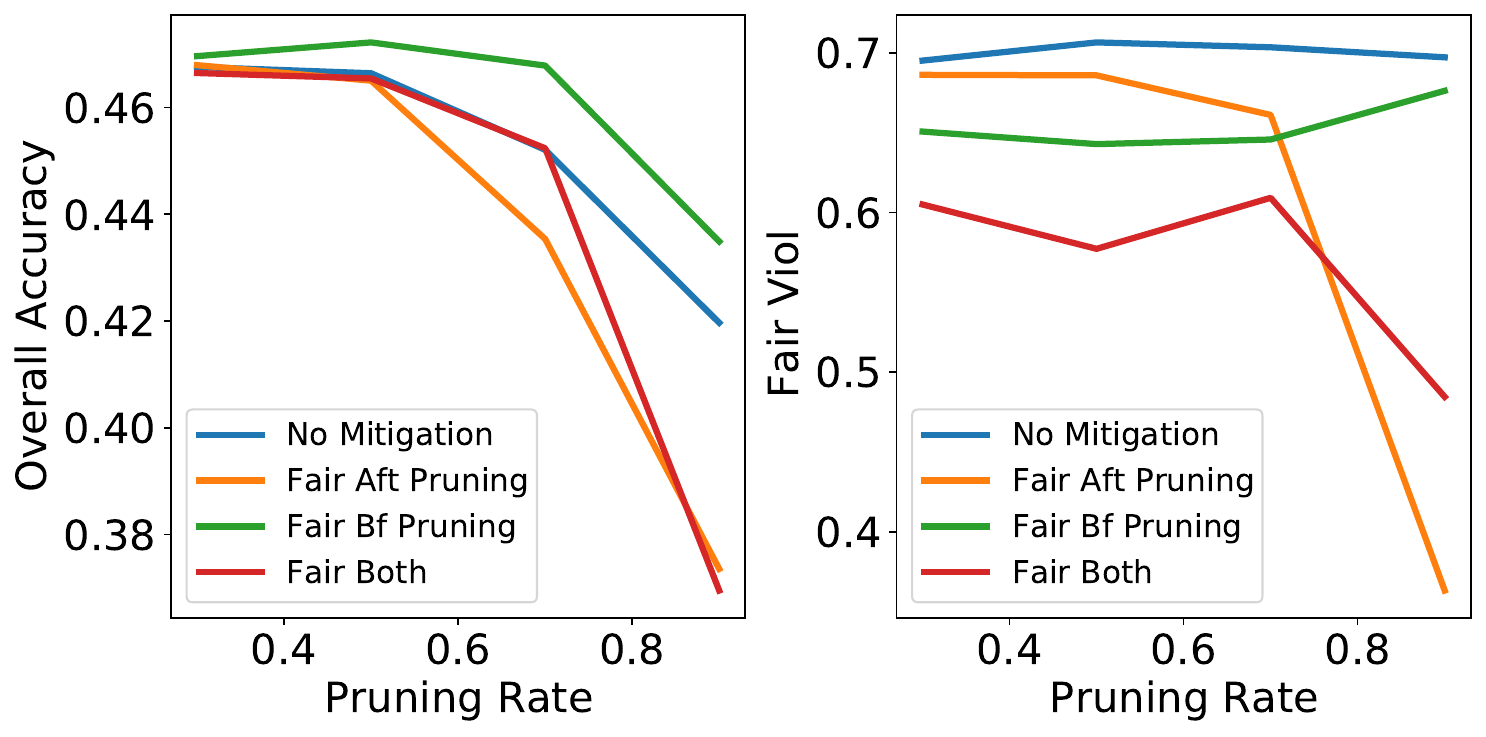}
\caption{Accuracy and Fairness violations attained by all models on ResNet50, UTK-Face dataset with {\sl ethnicity} (5 classes) as group attribute (and labels) [left]  and {\sl age} (9 classes) [right].}
 \label{fig:mitigation_pruning-1}
\end{figure*}

\begin{figure*}[t]
 \centering
\includegraphics[width=0.45\linewidth]
 {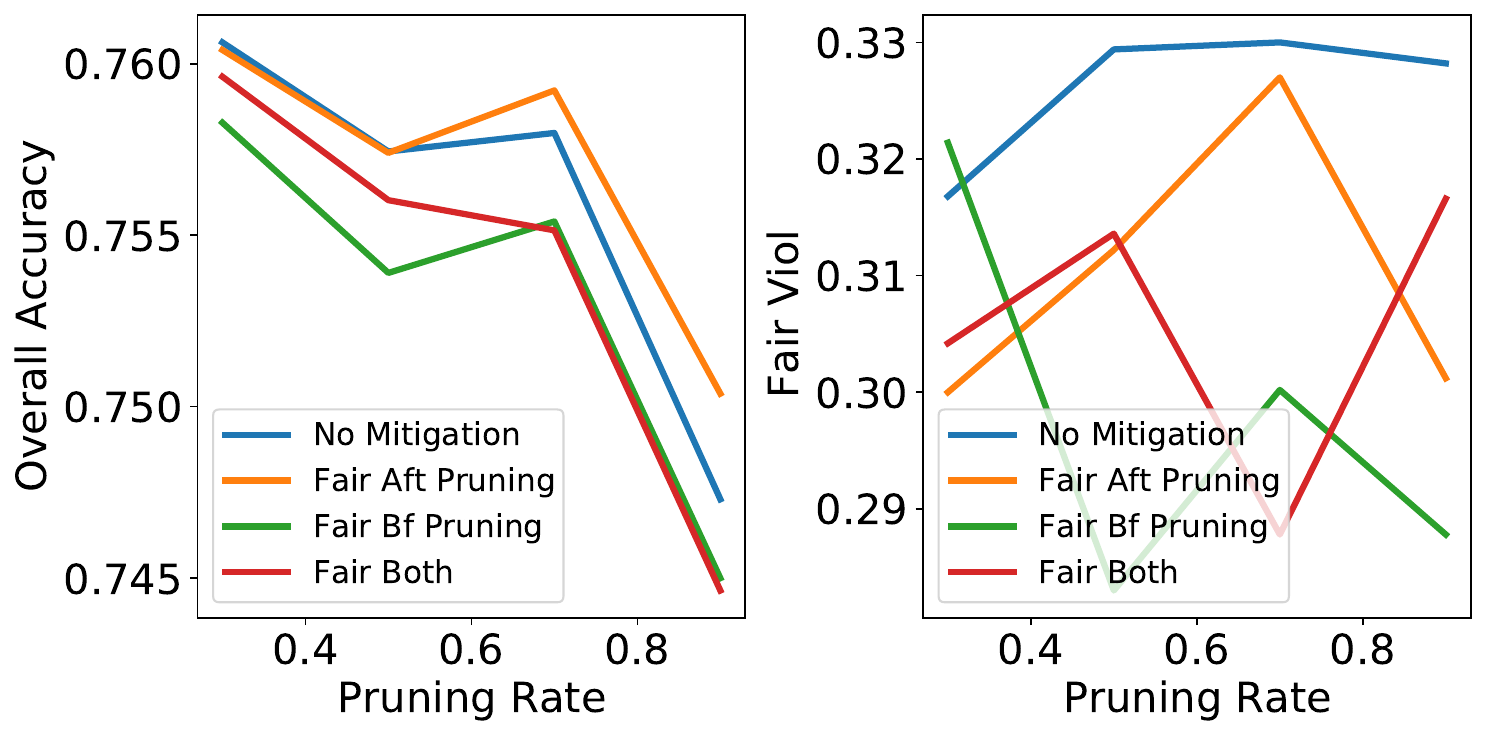}
 \hspace{20pt}
\includegraphics[width=0.45\linewidth]
 {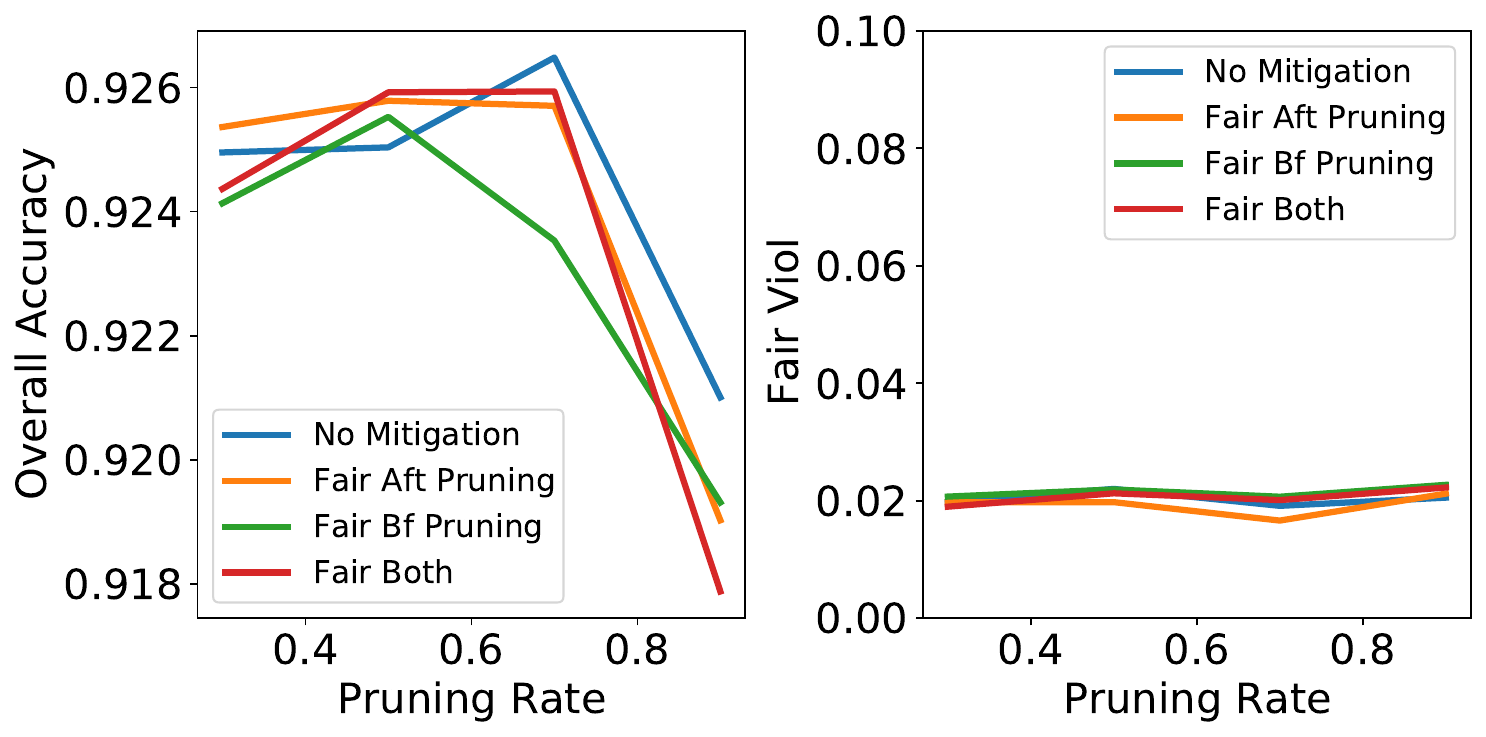}
\caption{Accuracy and Fairness violations attained by all models on VGG-19, CIFAR-10 dataset (left) and SVHN (right) with 10 class labels also used as group attribute.}
 \label{fig:mitigation_pruning-2}
\end{figure*}

\paragraph{Effects on accuracy.}
The section first focuses on analyzing the effects of accuracy drop due to applying the proposed mitigation solution for fair pruning. Figure \ref{fig:mitigation_pruning-1} compares the four models on the UTK-Face dataset using a ResNet50 architecture. The left subplots use {\sl ethnicity} as protected group and class label, with $|\cY| = 5$, while the right subplots use {\sl age} as protected group and class label, with $|\cY| = 9$. Notice that, as expected, all compared models present some accuracy deterioration as the pruning rate increases. However, notably, the deterioration of the models that apply the fair mitigation steps are comparable to (or even improved) those of the "{\sl No mitigation}" model, which applies standard pruning. 

A similar trend can be seen in Figure \ref{fig:mitigation_pruning-2} that reports results on CIFAR (left) and SVHN (right). Both use the ten class labels as protected attributes. These results clearly illustrate the ability of the mitigating solution to preserve highly accurate models. 

A comparison of the ``full'' (Equation \ref{eq:LD_slow}) and ``relaxed'' (Equation \ref{eq:LD_fast}) versions of the proposed mitigation solutions is 
provided in Table~\ref{tab:full_relaxed_results}. We note that while the "full" version leads to fairer results, the reduction in the various groups accuracy is often insubstantial. We also note that the running time of the "full" version is largely (over an order magnitude) longer than the relaxed counterpart. This is due to the calculation of gradient norms and the Hessian terms associated with each group.

\begin{table}[t]
\centering
\resizebox{0.85\columnwidth}{!}{%
\begin{tabular}{@{}rr@{\hspace{10pt}}ll@{}}
\toprule
{\bf Dataset} & {\bf version} & {\bf Class-wise accuracy} & {\bf Overall accuracy} \\
  \midrule
\multirow{2}{*}{\bf UTK-age bins}   & {\bf full}    & 0.856, 0.128, 0.145, 0.319, 0.331, 0.342, 0.181, 0.334, 0.512 & 0.395 \\
                                    & {\bf relaxed} & 0.810, 0.096, 0.141, 0.284, 0.385, 0.324, 0.227, 0.257, 0.533 & 0.390 \\[0.25em]
\multirow{2}{*}{\bf UTK-gender}	    & {\bf full}    & 0.830, 0.876	& 0.857\\
                                    & {\bf relaxed} & 0.868, 0.845  & 0.852\\[0.25em]
\multirow{2}{*}{\bf SVHN}	        & {\bf full}    & 0.864, 0.911, 0.869, 0.819, 0.887, 0.784, 0.840, 0.877, 0.805, 0.856 & 0.857\\ 
                                    & {\bf relaxed} & 0.824, 0.910, 0.775, 0.726, 0.827, 0.752, 0.747, 0.789, 0.713, 0.755 & 0.795\\[0.25em]
\multirow{2}{*}{\bf MNIST}	        & {\bf full}    & 0.998, 0.996, 0.993, 0.998, 0.994, 0.991, 0.991, 0.993, 0.992, 0.985 & 0.993\\
                                    & {\bf relaxed} & 0.994, 0.988, 0.989, 0.986, 0.987, 0.979, 0.981, 0.988, 0.969, 0.994 & 0.986\\
\bottomrule
\end{tabular}%
}
\caption{Full (Equation \ref{eq:LD_slow}) vs relaxed (Equation \ref{eq:LD_fast}) versions of the proposed mitigation solutions.}
\label{tab:full_relaxed_results}
\end{table}

\paragraph{Effects on fairness.}
The section next illustrates the ability of the proposed solution to achieve fair pruned models. 
Table \ref{tab:protected_attr_results} illustrates the results for the UTKFaces dataset with ethnicity as class labels and age as 
protected attributes for a CNN with two convolutional layers and three linear layers and prune amounts: 30\%, 50\%, 70\%, and 90\%. 
Notice how Fair Aft pruning and Fair both achieve relatively lesser fairness violations compared to the No mitigation and 
the Fair bf Pruning methods in most cases.

\begin{table}[t]
\centering
\resizebox{0.6\textwidth}{!}{%
\begin{tabular}{@{}r@{\hspace{10pt}} l@{\hspace{5pt}}l@{\hspace{5pt}}l@{\hspace{5pt}}l @{\hspace{10pt}} l@{\hspace{5pt}}l@{\hspace{5pt}}l@{\hspace{5pt}}l@{}}
\toprule
{\bf Methods}          & \multicolumn{4}{c}{\bf Overall accuracy}           & \multicolumn{4}{c}{\bf Fairness violations} \\ 
                       & 30\% & 50\% & 70\% & 90 \% & 30\% & 50\% & 70\% & 90\%  \\
\midrule

{\bf No mitigation}    & 0.546 & 0.545 & 0.529 & 0.559 & 0.179 & 0.186 & 0.152 & 0.134 \\
{\bf Fair bf Pruning}  & 0.539 & 0.557 & 0.529 & 0.540 & 0.189 & 0.190 & 0.174 & 0.238 \\
{\bf Fair Aft Pruning} & 0.538 & 0.532 & 0.497 & 0.472 & 0.172 & 0.161 & 0.163 & 0.05  \\
{\bf Fair both}        & 0.525 & 0.541 & 0.508 & 0.484 & 0.170 & 0.144 & 0.156 & 0.073 \\ 
\bottomrule
\end{tabular}%
}
\caption{Accuracy and fairness violations for the UTKFaces dataset with \emph{ethnicity} as class labels and \emph{age} as protected attributes and prune amounts of 30\%, 50\%, 70\%, and 90\%.}
\label{tab:protected_attr_results}
\end{table}

Next, the second and fourth subplots presented in Figures \ref{fig:mitigation_pruning-1} and \ref{fig:mitigation_pruning-2} illustrate the fairness violations obtained by the four models analyzed on different datasets and settings. 
We make the following observations: First, all the plots exhibit a consistent trend in that the mitigation solution produces models which improve the fairness of the baseline, "{\sl No mitigation}" model. 
Observe that, as already illustrated in Figure \ref{fig:example_mitigation}, the fair models tend to equalize the gradient norms and group Hessians components (and thus the distance to the decision boundary across groups). Thus, the resulting pruned models also attain better fairness, when compared to their standard counterparts. 

Next, notice that "{\sl Fair Aft Pruning}" often achieves better fairness violations than "{\sl Fair Bf Pruning}", especially at high pruning regimes. This is because the former has the advantage to apply the mitigation solution directly to the pruned model to ensure that the resulting model has low differences in gradient norms and group Hessians. The presentation also illustrates the application of the mitigation strategies both before and after pruning ({\sl Fair Both}) which shows once again the significance of applying the mitigation solution over the pruned network. 

Finally, it is notable that "{\sl Fair Aft Pruning}" achieves good reductions in fairness violation. Indeed, pre-trained large (non-pruned) fair models may not be available and the ability to retrain these large models prior to pruning may be hindered by their size and complexity.

\section{Discussion and limitations}
\label{sec:limitations}
This section discusses three key messages found in this study. First, we notice that pruning affecting model separability and distance to the decision boundary is related to concepts also explored in robust machine learning \cite{https://doi.org/10.48550/arxiv.1412.6572,papernot2016transferability}. Not surprisingly, some recent literature in network pruning has empirically observed that pruning may have a negative impact on adversarial robustness \cite{guo2019sparse}. These observations raise questions about the connection between pruning, robustness, and fairness, which we believe is an important direction to further investigate. 

Next, although the solution proposed in Problem \eqref{eq:LD_fast} allows it to be adopted in large models, the size of modern ML models (together with the amount of hyperparameters searches) may hinder retraining such original massive models from incorporating fairness constraints. Notably, however, the proposed mitigation solution can be used as a post-processing step to be applied during the pruning operation directly. The previous section shows that the proposed method delivers desirable performance in terms of both accuracy and fairness.

Finally, we notice that the results analyzed in this paper pertain to losses that are twice differentiable. Lifting such an assumption will be an interesting and challenging future research avenue. 

\paragraph{Ethical considerations.}
The analyses and solutions reported in this paper should not be intended as an endorsement for using the developed techniques to aid facial recognition systems. We hope this work creates further awareness of the unfairness caused by pruning mechanisms in ML systems in the context of models that could be deployed in energy-efficient devices, such as smart cameras or access control systems, etc.

\section{Conclusion}
This work observed that pruning, while effective in compressing large models with minimal loss of accuracy, can result in substantial disparate accuracy impacts. The paper examined the factors causing such disparities both theoretically and empirically showing that: {\bf (1)} disparity in gradient norms across groups and {\bf (2)} disparity in Hessian matrices associated with the loss functions computed using a groups' data are two key factors responsible for such disparities to arise.
By recognizing these factors, the paper also developed a simple yet effective retraining technique that largely mitigates the disparate impacts caused by pruning.

As reduced versions of large, overparameterized models become increasingly adopted in embedded systems to facilitate autonomous decisions, we believe that this work makes an important step toward {\em understanding} and {\em mitigating} the sources of disparate impacts observed in compressed learning models and hope it will spark awareness in this important area.

\section*{Acknowledgments}
This research is partly funded by NSF grants SaTC-1945541, SaTC-2133169, and CAREER-2143706. 
F.~Fioretto is also supported by a Google Research Scholar Award and an Amazon Research Award. 
The views and conclusions of this work are those of the authors only.

\bibliographystyle{abbrvnat}
\bibliography{lib}
\newpage
\section*{NeurIPS 2022 Paper Checklist}



\newcommand{\ans}[1]{{\color{purple}{{\em #1}}}}

\begin{enumerate}
\item For all authors
\begin{enumerate}[label=\alph*)]
	\item Do the main claims made in the abstract and introduction accurately reflect the paper's contributions and scope?

	\ans{Yes. The paper contributions are stated in the abstract and listed in the Introduction.}

	\item (b) Have you read the ethics review guidelines and ensured that your paper conforms to them?

	\ans{Yes.}

	\item Did you discuss any potential negative societal impacts of your work?
	\ans{This work sheds light on the reasons behind the observed disparate impacts and fairness violations through pruning. Pruning is a widely-used compression technique for large-scale models which are deployed in settings with less resources. Thus, the insights generated by this work may have a positive societal impact.}

	\item Did you describe the limitations of your work?




	\ans{Yes. Please, see section \ref{sec:limitations}.}
\end{enumerate}

\item If you are including theoretical results...
\begin{enumerate}
	\item Did you state the full set of assumptions of all theoretical results?

	\ans{Yes. The assumptions were stated in or before each Theorem and also reported in the Appendix.} 

	\item Did you include complete proofs of all theoretical results?


	\ans{Yes. While the main paper only contains proof sketches or intuitions, all complete proofs are reported in Appendix \ref{sec:missing_proofs}.}
\end{enumerate}

\item If you ran experiments...
\begin{enumerate}
	\item Did you include the code, data, and instructions needed to reproduce the main experimental results (either in the supplemental material or as a URL)?




	\ans{Yes. Code, datasets and experiments were submitted in 
	the supplemental material. We also provide a  link in Appendix \ref{sec:additional_results}.}

	\item Did you specify all the training details (e.g., data splits, hyperparameters, how they were chosen)?

	\ans{Yes. See Appendix  \ref{sec:additional_results}.}

	\item Did you report error bars (e.g., with respect to the random seed after running experiments multiple times)?

	\ans{The main evaluation metric adopted in this work is the excessive loss (see Equations \eqref{eq:2} and \eqref{eq:3})
	which implicitly captures the randomness of the private mechanisms. Providing error bars would be misleading.}

	\item Did you include the amount of compute and the type of resources used (e.g., type of GPUs, internal cluster, or cloud provider)?




	\ans{Yes. See Appendix \ref{sec:additional_results}}
\end{enumerate}

\item If you are using existing assets (e.g., code, data, models) or curating/releasing new assets...
\begin{enumerate}
	\item If your work uses existing assets, did you cite the creators?

	\ans{Yes. See References section.}

	\item Did you mention the license of the assets?





	\ans{Yes, when available.}

	\item Did you include any new assets either in the supplemental material or as a URL?


	\ans{No new asset was required to perform this research.}

	\item Did you discuss whether and how consent was obtained from people whose data you're using/curating?

	\ans{Yes. The paper uses public datasets.}

	\item Did you discuss whether the data you are using/curating contains personally identifiable information or offensive content?


	\ans{No. The adopted data is composed of standard benchmarks that have been used extensively in the ML literature and we believe the above does not apply.}
\end{enumerate}
	\item If you used crowdsourcing or conducted research with human subjects...

	\ans{No. This research did not use crowdsourcing.}
\end{enumerate}

\newpage
\appendix


\setcounter{theorem}{0}
\setcounter{lemma}{0}
\setcounter{proposition}{0}
\setcounter{property}{0}

\section{Missing Proofs}
\label{sec:missing_proofs}

\begin{theorem}
\label{thm:taylor} 
The \emph{excessive loss} of a group $a \in \cA$ is upper bounded by\footnote{
  With a slight abuse of notation, the results refer to $\bar{\btheta}$ as the homonymous vector which is extended with $k-\bar{k}$ zeros.
}: 
\begin{align}
  R(a)  \leq 
  \left\| \bm{g}^{\ell}_a \right\|
  \times \left\|  \bar{\btheta} - \opttheta\right\|
  + 
  \frac{1}{2} \lambda \left( \bm{H}_{a}^{\ell} \right) 
  \times 
  \left\| \bar{\btheta} - \opttheta\right\|^2 
  + 
  \cO\left( \left\|\bar{\btheta} - \opttheta \right\|^3 \right),
  \label{eq:thm1}
\end{align}
where 
$\bm{g}^{\ell}_a = \nabla_{\opttheta} J( \opttheta; D_{a})$ is the vector of gradients associated with the loss function $\ell$ evaluated at $\opttheta$ and computed using group data $D_a$,  
$\bm{H}_{a}^{\ell} = 
\nabla^2_{\opttheta} J(\opttheta; D_{a})$ is the Hessian matrix of the loss function $\ell$, at the optimal parameters vector $\opttheta$, computed using the group data $D_a$ (henceforth simply referred to as \emph{group hessian}), and  
$\lambda(\Sigma)$ is the maximum eigenvalue of a matrix $\Sigma$.
\end{theorem}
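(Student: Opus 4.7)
The plan is to obtain the bound directly from a second-order Taylor expansion of the group loss $J(\cdot; D_a)$ about the unpruned optimum $\opttheta$, and then dominate the two leading terms using Cauchy--Schwarz and the Rayleigh quotient. Since the loss $\ell$ is assumed twice differentiable (as noted immediately before the statement), so is $J(\cdot; D_a)$, which justifies the expansion with a third-order remainder.

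First I would write
\[
  J(\bar{\btheta}; D_a)
  = J(\opttheta; D_a)
  + \bm{g}_a^{\ell\,\top}(\bar{\btheta}-\opttheta)
  + \tfrac{1}{2}(\bar{\btheta}-\opttheta)^{\top}\bm{H}_a^{\ell}(\bar{\btheta}-\opttheta)
  + O\!\left(\|\bar{\btheta}-\opttheta\|^{3}\right),
\]
which is just Taylor's theorem with the identifications $\bm{g}_a^\ell = \nabla_{\btheta} J(\opttheta;D_a)$ and $\bm{H}_a^\ell = \nabla^2_{\btheta} J(\opttheta;D_a)$. Subtracting $J(\opttheta;D_a)$ from both sides yields $R(a)$ on the left, and it remains only to upper-bound the linear and quadratic terms on the right.

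Next I would bound the linear term by Cauchy--Schwarz: $\bm{g}_a^{\ell\,\top}(\bar{\btheta}-\opttheta) \leq \|\bm{g}_a^\ell\|\,\|\bar{\btheta}-\opttheta\|$. For the quadratic term, since $\bm{H}_a^\ell$ is symmetric (as the Hessian of a twice-differentiable scalar function), its Rayleigh quotient satisfies $v^\top \bm{H}_a^\ell v \leq \lambda(\bm{H}_a^\ell)\,\|v\|^2$ for every $v$; applying this with $v = \bar{\btheta}-\opttheta$ produces $\tfrac{1}{2}\lambda(\bm{H}_a^\ell)\,\|\bar{\btheta}-\opttheta\|^{2}$. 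Combining these two inequalities with the remainder term from Taylor's theorem gives exactly the bound in \eqref{eq:thm1}.

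There is really no significant obstacle: the result is a direct application of three standard facts (Taylor expansion, Cauchy--Schwarz, and the maximum-eigenvalue bound on a symmetric quadratic form). The only subtlety worth a brief remark is the footnote's convention that $\bar{\btheta}$ is understood as the zero-padded extension to $\mathbb{R}^k$, which ensures the difference $\bar{\btheta}-\opttheta$ and the Hessian $\bm{H}_a^\ell$ live in compatible spaces so that the Taylor expansion and the matrix--vector products above are well-defined.
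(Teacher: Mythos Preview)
Your proposal is correct and follows essentially the same approach as the paper's proof: a second-order Taylor expansion of $J(\cdot;D_a)$ about $\opttheta$, followed by Cauchy--Schwarz for the linear term and the Rayleigh-quotient bound for the quadratic term. Your additional remarks on the symmetry of the Hessian and on the zero-padding convention are fine clarifications that the paper leaves implicit.
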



\begin{proof}
Using a second order Taylor expansion around $\opttheta$, the excessive loss $R(a)$ for a group $a \in \cA$ can be stated as:
\begin{align*}
  R(a) 
  &=  J(\bar{\btheta}; D_{a}) - J(\opttheta; D_{a}) \\  
  & =   \left[ J\left(\opttheta; D_{a}\right) 
    +  \left(\bar{\btheta} - \opttheta \right)^\top \, 
    \nabla_{\theta} J\left(\opttheta; D_{a} \right)  
  + \frac{1}{2} \left(\bar{\btheta} - \opttheta \right)^\top\, 
  \bm{H}_{a}^\ell \left(\bar{\btheta} - \opttheta \right) 
   + \cO \left( \left\| \opttheta - \bar{\btheta} \right\|^3 \right) 
   \right] 
   - J\left(\opttheta; D_{a}\right)  \\
  &= \left(\bar{\btheta} - \opttheta \right)^\top\,  \bm{g}^{\ell}_a 
  + \frac{1}{2} \left(\bar{\btheta} - \opttheta \right)^\top \,
  \bm{H}_a^\ell \left(\bar{\btheta} - \opttheta \right) 
  +  \cO \left( \|\opttheta - \bar{\btheta}\|^3 \right)
\end{align*}
The above, follows from the loss $\ell(\cdot)$ being at least twice differentiable, by assumption.

By Cauchy-Schwarz inequality, it follows that 
\[
    \left( \bar{\btheta} - \opttheta \right)^\top 
    \bm{g}^{\ell}_a \leq 
    \left\| \bar{\btheta} - \opttheta \right\| \times 
    \left\|\bm{g}^{\ell}_a \right\|.
\] 
In addition, due to the property of Rayleigh quotient we have:
\[
    \frac{1}{2} \left(\bar{\btheta} - \opttheta\right)^\top 
    \bm{H}_{a}^\ell \left(\bar{\btheta} - \opttheta\right) 
    \leq 
    \frac{1}{2} \lambda \left(\bm{H}_a^\ell \right) \times 
    \left\| \bar{\btheta} - \opttheta \right\|^2.
\]
The upper bound for the excessive loss $R(a)$ is thus obtained by combining these two inequalities.
\end{proof}

\begin{proposition}
\label{thm:grad_imbalance} 
Consider two groups $a$ and $b$ in $\cA$ with $|D_a| \geq |D_b|$. Then 
\(
\left\| \bm{g}_a^\ell \right\|  \leq  \left\| \bm{g}_b^\ell \right\|.
\)
\end{proposition}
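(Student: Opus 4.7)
The plan is to exploit the first-order optimality condition at the local minimum $\opttheta$. By the assumption that training converges to a local minimum of the empirical risk, $\nabla_{\btheta} J(\opttheta; D) = 0$. The next step is to decompose the empirical risk as a size-weighted mixture of group losses. Writing $J(\opttheta; D) = \frac{1}{n}\sum_i \ell(f_\opttheta(\bm{x}_i), y_i)$ and grouping samples by their protected attribute yields the identity
\begin{equation*}
  J(\opttheta; D) = \sum_{a' \in \cA} \frac{|D_{a'}|}{n}\, J(\opttheta; D_{a'}),
\end{equation*}
so that differentiating in $\btheta$ and evaluating at $\opttheta$ gives the key linear constraint
\begin{equation*}
  \sum_{a' \in \cA} |D_{a'}|\, \bm{g}_{a'}^\ell \;=\; \bm{0}.
\end{equation*}

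In the two-group case ($\cA = \{a, b\}$), this identity immediately reduces to $|D_a|\, \bm{g}_a^\ell = -|D_b|\, \bm{g}_b^\ell$. Taking Euclidean norms on both sides and using the hypothesis $|D_a| \geq |D_b|$ then yields
\begin{equation*}
  \|\bm{g}_a^\ell\| \;=\; \frac{|D_b|}{|D_a|}\, \|\bm{g}_b^\ell\| \;\leq\; \|\bm{g}_b^\ell\|,
\end{equation*}
which is the claim. This is the clean path I would take, and it is essentially a one-line consequence of the optimality condition combined with a size-weighted decomposition.

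The main obstacle is the extension to $|\cA| > 2$. In that setting the optimality identity only gives $|D_a|\,\bm{g}_a^\ell = -\sum_{c \neq a} |D_c|\, \bm{g}_c^\ell$, from which the triangle inequality yields $|D_a|\,\|\bm{g}_a^\ell\| \leq \sum_{c \neq a} |D_c|\,\|\bm{g}_c^\ell\|$; this alone does not force $\|\bm{g}_a^\ell\| \leq \|\bm{g}_b^\ell\|$ for an arbitrary pair $(a,b)$. I would handle this by lumping all groups other than $a$ into a single aggregate "complement" group $\bar{a}$, so that the two-group argument applies and shows $\|\bm{g}_a^\ell\|$ decreases as $|D_a|$ grows relative to $|D_{\bar{a}}|$; the pairwise comparison between any specific $a$ and $b$ then follows under the implicit assumption that the remaining gradients do not conspire to invert the inequality (an assumption consistent with the monotone trend observed empirically in Figure~\ref{fig:size_vs_grad}). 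I would flag this caveat explicitly rather than pretend the multi-group statement is tight from the optimality condition alone.
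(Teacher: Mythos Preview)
Your proof is correct and follows essentially the same route as the paper: use the first-order optimality condition $\nabla_{\btheta} J(\opttheta; D) = \bm{0}$, decompose the gradient as the size-weighted sum of group gradients, specialize to $\cA = \{a,b\}$ to obtain $|D_a|\,\bm{g}_a^\ell = -|D_b|\,\bm{g}_b^\ell$, and take norms. The paper's proof also silently collapses the sum over $\cA$ to two terms without addressing the $|\cA|>2$ case, so your explicit caveat about the multi-group extension is an addition rather than a deviation.
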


\begin{proof}
By the assumption that the model converges to a  local minima, it
follows that:

\begin{align*} 
    \nabla_{\theta} \cL(\opttheta; D) 
    &= \sum_{a \in \cA} \frac{|D_a|}{|D|} \nabla_{\theta} 
    J(\opttheta; D_a) \\
    &= \frac{|D_a|}{|D|} \bm{g}^{\ell}_a + \frac{|D_b|}{|D|} 
    \bm{g}^{\ell}_b  
    = \bm{0}
\label{eq:grad_decomp}
\end{align*}

Thus, $\bm{g}^{\ell}_a = -\frac{|D_b|}{|D_a|} \bm{g}_b$. Hence $\| \bm
{g}^{\ell}_a \| = \frac{|D_b|}{|D_a|} \| \bm{g}^\ell_b \|  \leq \| \bm{g}^\ell_b\| $, because
$|D_a| \geq |D_b| $.
\end{proof}

\begin{proposition}
\label{thm:acc_vs_grad}
For a given group $a \in \cA$, gradient norms can be upper bounded as:
\[
    \|\bm{g}_a^\ell \| \in
    \cO\left( 
    \sum_{(\bm{x}, y) \in D_a} 
    \underbrace{\|f_{\opttheta}(\bm{x}) - y \|}_{\textit{Accuracy}}
    \times
    \left\| \nabla_{\opttheta} f_{\opttheta}(\bm{x}) \right\|
    \right).
\]
\end{proposition}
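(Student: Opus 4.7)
The plan is to start from the definition $\bm{g}_a^\ell = \nabla_{\btheta} J(\opttheta; D_a) = \tfrac{1}{|D_a|}\sum_{(\bm{x},y)\in D_a} \nabla_{\btheta}\ell(f_{\opttheta}(\bm{x}), y)$ and push the gradient through using the chain rule, i.e.~$\nabla_{\btheta}\ell(f_{\btheta}(\bm{x}), y) = \nabla_{f}\ell(f_{\btheta}(\bm{x}), y)\cdot \nabla_{\btheta} f_{\btheta}(\bm{x})$. After this decomposition the proof reduces to bounding the per-sample quantity $\|\nabla_f \ell(f_{\opttheta}(\bm{x}), y)\cdot \nabla_{\btheta} f_{\opttheta}(\bm{x})\|$ and then applying the triangle inequality under the summation.

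The key algebraic step is identifying $\nabla_f \ell$ with $f_{\opttheta}(\bm{x}) - \bm{y}$ for the loss functions of interest. For a cross-entropy loss composed with a soft-max output layer, the well-known identity $\nabla_f \ell = f_{\opttheta}(\bm{x}) - \bm{y}$ holds (this is exactly the derivation shown inline in Section~\ref{sec:grad_analysis}), and for a mean-squared error loss $\ell(f_{\btheta}(\bm{x}), y) = \tfrac{1}{2}\|f_{\btheta}(\bm{x}) - y\|^2$ one has $\nabla_f \ell = f_{\btheta}(\bm{x}) - y$ directly. Substituting either form gives $\|\nabla_{\btheta}\ell\| \le \|f_{\opttheta}(\bm{x}) - \bm{y}\| \cdot \|\nabla_{\btheta} f_{\opttheta}(\bm{x})\|$ by sub-multiplicativity (or Cauchy-Schwarz applied componentwise and then the operator norm bound).

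Combining, I would apply the triangle inequality to the sum,
\begin{align*}
\|\bm{g}_a^\ell\|
&= \left\| \frac{1}{|D_a|} \sum_{(\bm{x},y)\in D_a} \nabla_f \ell(f_{\opttheta}(\bm{x}), y) \cdot \nabla_{\btheta} f_{\opttheta}(\bm{x}) \right\| \\
&\le \frac{1}{|D_a|} \sum_{(\bm{x},y)\in D_a} \|f_{\opttheta}(\bm{x}) - \bm{y}\| \cdot \|\nabla_{\btheta} f_{\opttheta}(\bm{x})\|,
\end{align*}
which, since the $\tfrac{1}{|D_a|}$ factor is absorbed into the asymptotic $\cO(\cdot)$ notation, yields the claimed bound.

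The only real obstacle is a clean unified statement: the identity $\nabla_f \ell = f - \bm{y}$ is exact for MSE but for cross-entropy it requires the standard soft-max/cross-entropy coupling (which the paper already invokes when defining $f^z_{\btheta}(\bm{x})$ as soft-max outputs). I would therefore state the proposition under this standing assumption on $\ell$ and $f$, note that the same derivation pattern applies to any loss for which $\|\nabla_f \ell(f, y)\|$ is controlled by $\|f - y\|$ up to a constant (Lipschitz-in-$f$ losses), and defer the MSE case to a short remark mirroring the cross-entropy calculation already displayed in the main text.
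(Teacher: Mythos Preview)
Your proposal is correct and matches the paper's own argument essentially step for step: the paper likewise applies the chain rule to write $\nabla_{\btheta}\ell = \nabla_f \ell \cdot \nabla_{\btheta} f_{\opttheta}$, invokes the identity $\nabla_f \ell = f_{\opttheta}(\bm{x}) - \bm{y}$ for the cross-entropy/soft-max pairing (and $2(f_{\opttheta}(\bm{x})-y)$ for MSE), then bounds the sum via the triangle inequality and a norm product. The only cosmetic difference is that the paper treats the two loss cases in separate paragraphs rather than unifying them under a Lipschitz-in-$f$ remark as you suggest.
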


The above proposition is presented in the context of cross entropy loss or mean squared error loss functions. These two cases are reviewed as follows

\paragraph{Cross Entropy Loss.} 
Consider a classification task with cross entropy loss: 
$\ell(f_{\opttheta}(\bm{x}), y) = - \sum_{z \in \cY} f^z_{\opttheta}
(\bm{x}) \bm{y}^z$, where $f_{\opttheta}^z(\bm{x})$ represents the $z$-th element of the output associated with the soft-max 
layer of model $f_{\opttheta}$, and $\bm{y}$ is a one-hot encoding of the true label $y$, with $\bm{y}^z$ representing its $z$-th element, then,
\begin{align*}
    \| \bm{g}_a\|  
    = \left\| \nabla_{\btheta} J(\opttheta; D_a,) \right\| 
    &= \left\| \nicefrac{1}{|D_a|} \sum_{(\bm{x},y) \in D_{a}} 
    \nabla_{f} \ell(f_{\opttheta}(\bm{x}), y) \times 
    \nabla_{\btheta} f_{\opttheta}(\bm{x}) \right\|\\
    &= \left\| \nicefrac{1}{|D_a|} \sum_{(\bm{x}, y) \in D_a} 
    (f_{\opttheta}(\bm{x}) - \bm{y}) \times  
    \nabla_{\btheta} f_{\opttheta}(\bm{x}) \right\|\\
    &\leq \nicefrac{1}{|D_a|}\sum_{(\bm{x}, y) \in D_a}  
    \left\| f_{\opttheta}(\bm{x}) - \bm{y}  \right\| \times 
    \left\| \nabla_{\btheta} f_{\opttheta}(\bm{x}) \right\|,
\end{align*}
where the third equality is due to that the gradient of the cross entropy loss reduces to $f_{\opttheta}(\bm{x}) - \bm{y}$. 

\paragraph{Mean Squared Error.} 
Next, consider a regression  task with mean squared error loss 
$\ell(f_{\opttheta}(\bm{x}), y) =  ( f_{\opttheta}(\bm{x}) - y)^2$. Using the same notation as that made above, if follows:
\begin{align*}
    \| \bm{g}_a\|  
    = \left\| \nabla_{\btheta} J(\opttheta; D_a,) \right\| 
    &= \left\| \nicefrac{1}{|D_a|} \sum_{(\bm{x},y) \in D_{a}} 
    \nabla_{f} \ell(f_{\opttheta}(\bm{x}), y) \times 
    \nabla_{\btheta} f_{\opttheta}(\bm{x}) \right\|\\
    &= \left\| \nicefrac{2}{|D_a|} \sum_{(\bm{x}, y) \in D_a} 
    (f_{\opttheta}(\bm{x}) - \bm{y}) \times  
    \nabla_{\btheta} f_{\opttheta}(\bm{x}) \right\|\\
    &\leq \nicefrac{2}{|D_a|}\sum_{(\bm{x}, y) \in D_a}  
    \left\| f_{\opttheta}(\bm{x}) - \bm{y}  \right\| \times 
    \left\| \nabla_{\btheta} f_{\opttheta}(\bm{x}) \right\|,
\end{align*}
where the third equality is due to that the gradient of the mean squared error loss w.r.t.~$f_{\opttheta}(\cdot)$ reduces to $2( f_{\opttheta}(\bm{x}) - \bm{y})$. 




\begin{theorem}
\label{thm:hessian_norm_bound} 
Let $f_\theta$ be a binary classifier trained using a binary cross entropy loss. For any group $a \in \cA$, the maximum eigenvalue of the group Hessian $\lambda(\bm{H}_a^{\ell})$ can be upper bounded by:
\begin{equation}
    \lambda(\bm{H}_a^{\ell}) \leq \frac{1}{|D_a|}
    \sum_{(\bm{x}, y)\in D_a}
    \underbrace{
        \left( {f}_{\opttheta}(\bm{x}) \right) 
        \left( 1 - {f}_{\opttheta}(\bm{x}) \right)}_{\textit{Distance to decision boundary}} 
        \times 
        \left\| \nabla_{\btheta} f_{\opttheta}(\bm {x}) \right\|^2 
     + 
         \underbrace{\left| f_{\opttheta}(\bm{x}) - y \right|}_{\textit{Accuracy}} 
         \times 
        \lambda\left( \nabla^2_{\btheta} f_{\opttheta}(\bm{x}) \right). 
    \label{eq:hessian_norm_bound}
\end{equation}
\end{theorem}

\begin{proof}
First notice that an upper bound for the Hessian loss computed on a group $a \in \cA$ can be derived as:
\begin{align}
    \lambda(\bm{H}_a^{\ell}) 
    &= \lambda\left( 
    \frac{1}{|D_a|} \sum_{(\bm{x}, y) \in D_a} \bm{H}_{\bm{x}}^{\ell} \right) 
    \leq 
    \frac{1}{|D_a|} \sum_{(\bm{x}, y) \in D_a}  
    \lambda\left( \bm{H}_{\bm{x}}^{\ell} \right)
  \label{eq:hessian_decompose}
\end{align}
where $\bm{H}_{\bm{x}}^{\ell}$ represents the Hessian loss associated with a sample  $\bm{x} \in D_a$ from group $a$.
The above follows Weily's inequality which states that for any two symmetric matrices $A$ and $B$,  $\lambda(A + B) \leq \lambda(A) + \lambda(B)$. 

Next, we will derive an upper bound on the Hessian loss associated to a sample $\bm{x}$. First, based on the chain rule a closed form expression for the Hessian loss associated to a sample $\bm{x}$ can be written as follows:
\begin{align} 
    \bm{H}^{\ell}_{\bm{x}} &=  
    \nabla^2_{f}   \ell\left(f_{\opttheta}(\bm{x}), y\right)
    \left[ \nabla_{\btheta} f_{\opttheta}(\bm{x}) 
    \left( \nabla_{\btheta} f_{\opttheta}(\bm{x}) \right)^\top \right]
    + 
    \nabla_{f} \ell\left(f_{\opttheta}(\bm{x}), y\right) 
    \nabla^2_{\btheta} f_{\opttheta}(\bm{x}).
   \label{eq:hessian_sample}
\end{align}
The next follows from that 
\begin{align*}
\nabla_{f}   \ell\left( f_{\opttheta}(\bm{x}), y\right) 
&= (f_{\opttheta}(\bm{x}) - y), \\
\nabla^2_{f}   \ell\left(f_{\opttheta}(\bm{x}), y\right) 
&= f_{\opttheta}(\bm{x}) \left(1 - f_{\opttheta}(\bm{x})\right).
\end{align*}
Applying the Weily inequality again on the R.H.S.~of Equation \ref{eq:hessian_sample}, we obtain:
\begin{align}
    \lambda(\bm{H}_{\bm{x}}^{\ell}) & \notag
    \leq  f_{\opttheta}(\bm{x}) 
    \left(1 - f_{\opttheta}(\bm{x})\right) \times 
    \left\| \nabla_{\btheta} f_{\opttheta}(\bm{x}) \right\|^2 
    + \lambda\left( f_{\opttheta}(\bm{x}) - y\right) \times 
    \nabla^2_{\btheta} f_{\opttheta}(\bm{x})\\
    & \leq 
    f_{\opttheta}(\bm{x}) 
    \left(1 - f_{\opttheta}(\bm{x})\right) \times 
    \left\| \nabla_{\btheta} f_{\opttheta}(\bm{x}) \right\|^2 
    + 
    \left| f_{\opttheta}(\bm{x}) - y \right| 
    \lambda \left( \nabla^2_{\btheta} 
    f_{\opttheta}(\bm{x}) \right)
    \label{eq:hessian_bound}
\end{align}

The statement of Theorem \ref{thm:hessian_norm_bound} is obtained combining Equations \ref{eq:hessian_bound} with \ref{eq:hessian_decompose}.
\end{proof}

\begin{proposition}
\label{prop:dist_bndry}
 Consider a binary  classifier $f_{\btheta}(\bm{x}) $. For a given sample $\bm{x} \in D $, the term ${f}_{\opttheta}(\bm{x}) (1 - {f}_{\opttheta}(\bm{x}) )$ is maximized when ${f}_{\opttheta}(\bm{x})  = 0.5$ and minimized when ${f}_{\opttheta}(\bm{x}) \in \{0,1\} $.
\end{proposition}

\begin{proof}

First, notice that $f_{\opttheta}(\bm{x}) \in [0,1]$, as it represents the soft prediction (that returned by the last layer of the network), thus ${f}_{\opttheta}(\bm{x}) \geq f^2_{\opttheta}(\bm{x})$. It follows that:

\begin{equation}
    f_{\opttheta}(\bm{x}) \left( 1 -  f_{\opttheta}(\bm{x}) \right) 
    =  f_{\opttheta}(\bm{x}) - f^2_{\opttheta}(\bm{x}) \geq 0.
\end{equation}
In the above, it is easy to observe that the equality holds when either $f_{\opttheta}(\bm{x}) = 0$ or $f_{\opttheta}(\bm{x}) = 1$.

Next, by the Jensen inequality, it follows that:
\begin{equation}
    f_{\opttheta}(\bm{x}) \left( 1 -  f_{\opttheta}(\bm{x}) \right) 
    \leq  \frac{\left( f_{\opttheta}(\bm{x})+1 
                     - f_{\opttheta}(\bm{x}) \right)^2}{4} 
    = \frac{1}{4}.
\end{equation}
The above holds when $f_{\opttheta}(\bm{x}) = 1 - f_{\opttheta}(\bm{x})$, in other words, when $f_{\opttheta}(\bm{x}) = 0.5$. 
Notice that, in the case of binary classifier, this refers to the case when the sample $\bm{x}$ lies on the decision boundary. 
\end{proof}

\section{Dataset and Experimental Settings}
\label{sec:additional_results}

\subsection{Datasets}

The paper uses the following datasets to validate the findings discussed in the main paper: 
\begin{itemize}
    \item {\bf UTK-Face}~\cite{zhifei2017cvpr}.
    A large-scale face dataset with a long age span (range from 0 to 116 years old). The dataset consists of over 20,000 face images with annotations of age, gender, and ethnicity. The images cover large variations in pose, facial expression, illumination, occlusion, resolution, etc. 
    The experiments adopt the following attributes for classification (e.g., $\cY$) and as protected group ($\cA$):  {\sl ethnicity}, {\sl age bins}, {\sl gender}.
    
    \item {\bf CIFAR-10}~\cite{cifar10}.
    This dataset consists of 60,000 32$\times$32 RGB images in 10 classes, with 6,000 images per class. The 10 different classes represent airplanes, cars, birds, cats, deer, dogs, frogs, horses, ships, and trucks. 
    
    \item {\bf SVHN}~\cite{Netzer2011ReadingDI}
    Street View House Numbers (SVHN) is a digit classification dataset that contains 600,000 32$\times$32 RGB images of printed digits (from 0 to 9) cropped from pictures of house number plates.
    

\end{itemize}

\subsection{Architectures, Hyper-parameters, and Settings}

The study adopts the following architectures to validate the results of the main paper:
\begin{itemize}
    \item {\bf ResNet18}: An 18-layer architecture, with 8 residual blocks. Each residual block consists of two convolution layers.  The model has $\sim 11$ million trainable parameters. 
    \item {\bf ResNet50} This model contains 48 convolution layers, 1 MaxPool layer and a AvgPool layer. ResNet50 has $\sim 25$ million trainable parameters.
    \item {\bf VGG-19} This model consists of 19 layers (16 convolution layers, 3 fully connected layers, 5 MaxPool layers and 1 SoftMax layer). The model has $\sim 143$ million parameters.  
\end{itemize}

Hyperparameters for each of the above models was performed over a grid search (for different learning rates = $[0.0001, 0.001, 0.01, 0.1, 0.5, 0.05, 0.005, 0.0005]$) over a cluster of NVIDIA RTX A6000 with the above networks using the UTKFace dataset. 
The models with the highest accuracy were chosen and employed for the assessment of the mitigation solution in Sec.~\ref{sec:AssmtMitigation}. The running time required for all sets of experiments which include mitigation solutions was about \textasciitilde{3} \emph{days}. 

The training uses the SGD optimizer  with a momentum  of $0.9$ and weight\_decay of $1e^{-4}$.  
Finally, the Lagrangian step size adopted in the Lagrangian dual learning framework \cite{fioretto2020lagrangian} is set to $0.001$.

All the models developed were implemented using Pytorch 3.0. The training was performed using NVidia Tesla P100-PCIE-16GB GPUs and 2GHz Intel Cores. The model was run for 100 epochs for the CIFAR-10 and SVHN and 40 epochs for UTK-Face dataset. 
Each reported experiment is an average of 10 repetitions. 
In all experiments, the protected group set coincides with the target label set: i.e., $\cA = \cY$.


\section{Additional Experimental Results}
\label{sec:additional_results}



\subsection{Impact of pruning on fairness}

This section shows and affirms the impact of pruning towards accuracy disparity through VGG-19 network. The same training procedures as employed with ResNet18 in Fig~\ref{fig:motivation} were followed. Each demographic group's accuracy is shown before and after pruning on the UTK-Face dataset in two cases: when \emph{ethnicity} is a group attribute as in Figure \ref{fig:utk_race_vgg_motivation}, and when \emph{gender} is a group attribute as in Figure \ref{fig:utk_gender_vgg_motivation}. A consistent message is that under a higher pruning rate, the accuracies are more imbalanced across different groups, emphasizing the negative impact of pruning on fairness.


\begin{figure*}[t]
 \centering
    \includegraphics[width=\linewidth]{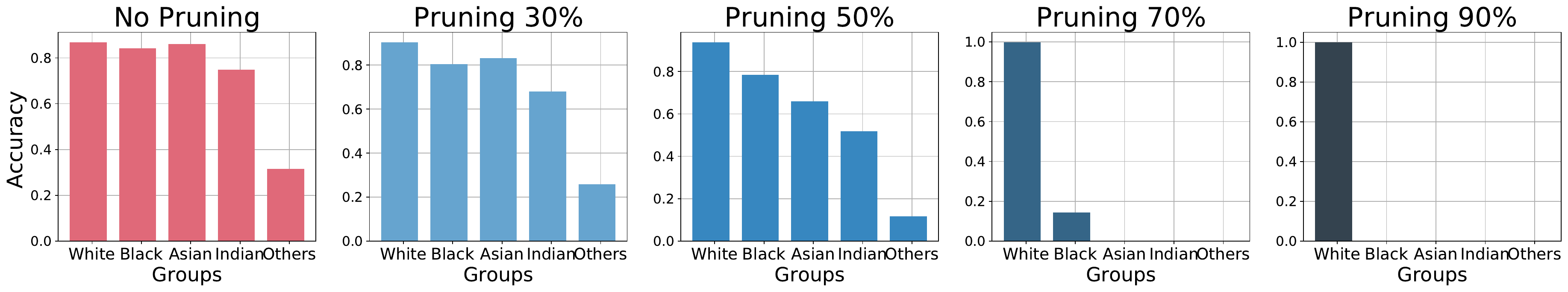}
    \caption{Accuracy of each demographic group in the UTK-Face dataset with ethnicity (5 classes) as group attribute using VGG19 over increasing pruning rates.}
   \label{fig:utk_race_vgg_motivation}
\end{figure*}

\begin{figure*}[t]
 \centering
    \includegraphics[width=\linewidth]{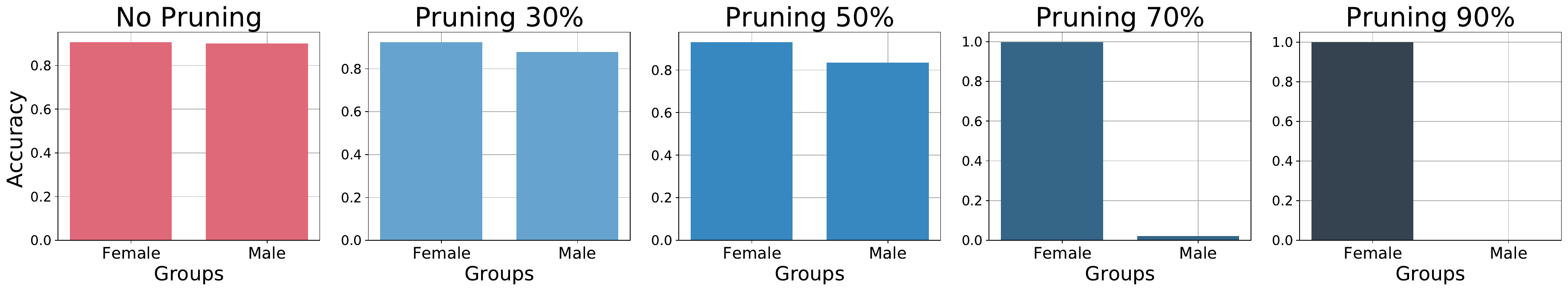}
    \caption{Accuracy of each demographic group in the UTK-Face dataset with gender (2 classes) as group attribute using VGG19 over increasing pruning rates.}
   \label{fig:utk_gender_vgg_motivation}
\end{figure*}


\subsection{Correlation of gradient/hessian norm and average distance to the decision boundary}

This subsection elaborates the impact of gradient norms and group Hessians towards the fairness issues shown in Figures \ref{fig:utk_race_vgg_motivation} and  \ref{fig:utk_gender_vgg_motivation}. In Section \ref{sec:grad_analysis}, it has been shown that the group with a larger gradient norm before pruning will be penalized more than the groups with a smaller gradient norm. Figures  \ref{fig:utk_race_vgg_grad} and \ref{fig:utk_gender_vgg_grad} show the gradient norm of each demographic group for UTK-Face dataset under two choices of protected attributes for VGG 19 networks. The results indicate that a group penalized less will have a smaller gradient norm with respect to those of the other groups. 

In addition, Section \ref{sec:hessian_analysis} supports that Hessian norm is another factor. More precisely, the groups with a larger Hessian norm will be penalized more (drop much more in accuracy) than groups with a smaller Hessian norm. Evidence is provided for the claim on VGG19 in Figures \ref{fig:utk_gender_vgg_grad} and \ref{fig:utk_race_vgg_grad}. These results on VGG19 again confirm the theoretical findings.

Finally, in Section \ref{sec:hessian_analysis}, a positive correlation between gradient norms and Hessian groups is shown in Theorem \ref{thm:hessian_norm_bound}, and a negative correlation between Hessian groups and distance to the decision boundary is shown in Proposition \ref{prop:dist_bndry}. These important results again are supported by the results in Figures \ref{fig:utk_gender_vgg_grad} and \ref{fig:utk_race_vgg_grad}.

\begin{figure*}[t]
 \centering
    \includegraphics[width=0.8\linewidth]{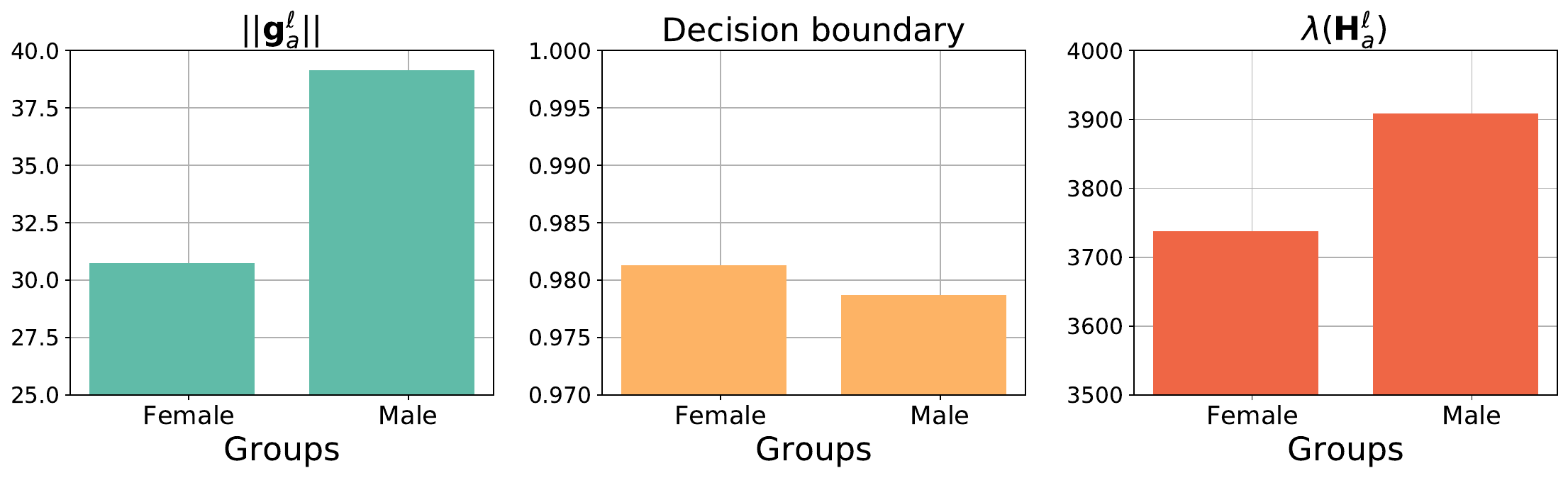}
    \caption{Gradient/Hessian norm and average distance to the decision boundary of each demographic group in the UTK-Face dataset with gender (2 classes) as group attribute using VGG19 with no pruning.}
   \label{fig:utk_gender_vgg_grad}
\end{figure*}

\begin{figure*}[t]
 \centering
    \includegraphics[width=0.8\linewidth,height=100pt]{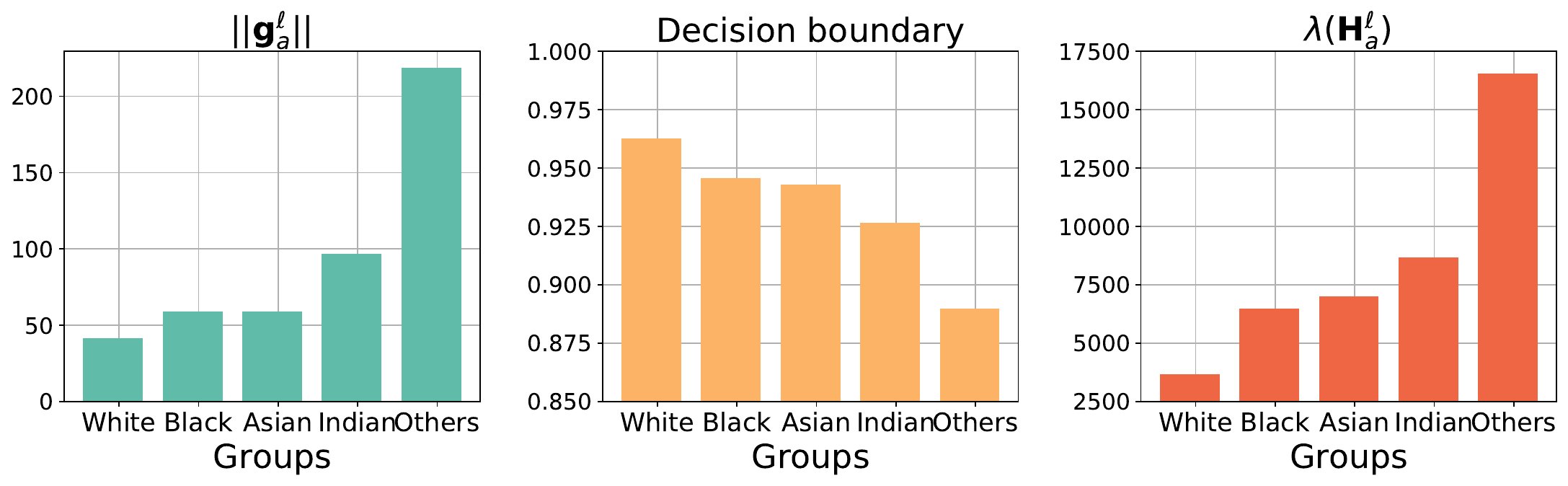}
    \caption{Gradient/Hessian norm and average distance to the decision boundary of each demographic group in the UTK-Face dataset with ethnicity (5 classes) as group attribute using VGG19 with no pruning.}
   \label{fig:utk_race_vgg_grad}
\end{figure*}

\subsection{Impact of group sizes to gradient norm}

This section presents additional empirical results to support Theorem \ref{thm:grad_imbalance}, stating that the group with more samples will tend to have a smaller gradient norm. In these  experiments, run on a ResNet50 network, one group is chosen and upsampled 1$\times$, 5$\times$, 10$\times$, and 20$\times$ times. Note that by increasingly upsampling it, the group becomes the majority group in that dataset. A group with \emph{more samples} is expected to end up with a \emph{smaller gradient norm} when the training convergences.

\paragraph{UTK-Face with gender}
Since the UTK-Face is balanced with regard to gender (Female/Male), the number of samples in Female, and Male groups is upsampled in turn. Figure \ref{fig:utk_gender_group_sizes_impact} reports the respective gradient norms at the last training iteration when upsampling Females (left) and Males (right.) Note how the Male group, initially with no upsampling, has a larger gradient norm than the Female group (right sub-plot). However, if the number of Male samples is increased enough, its gradient norm becomes smaller than that of the Female group.

 \begin{figure*}[t]
 \centering
\includegraphics[width=0.6\linewidth]{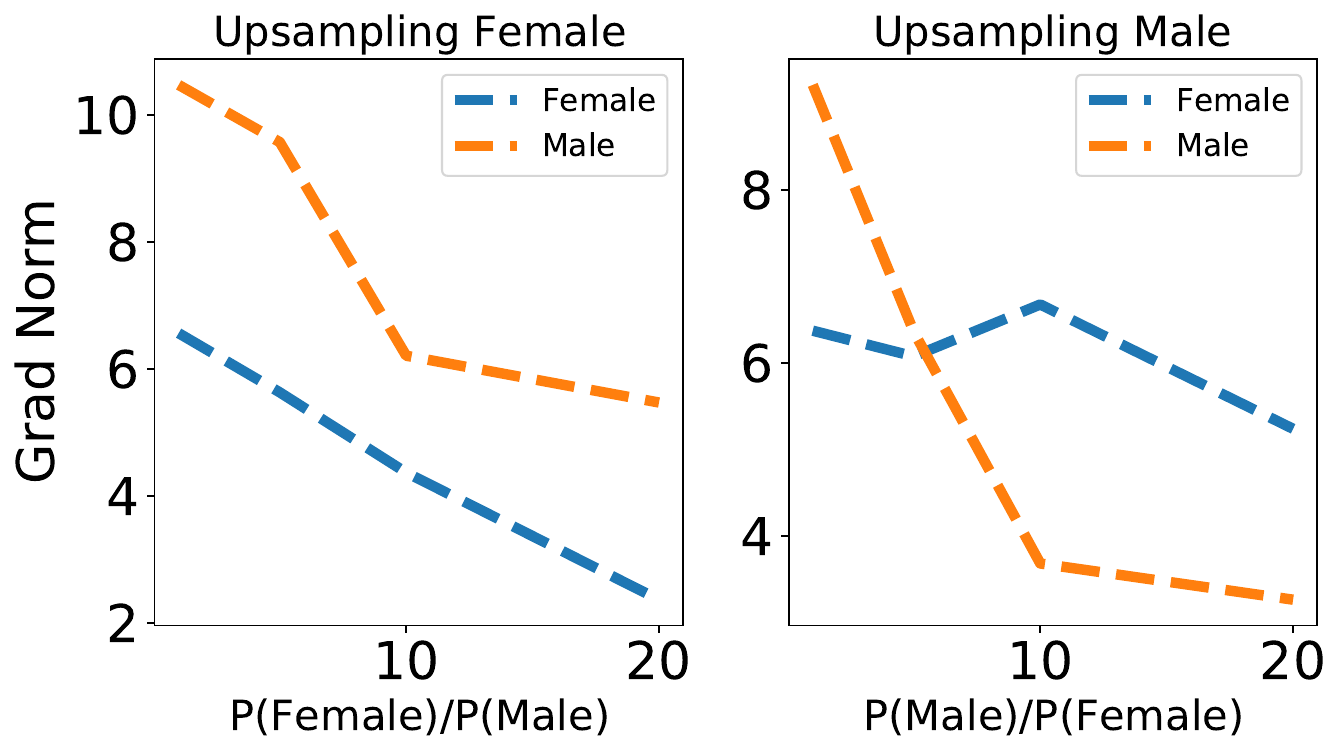}
\caption{Impact of group sizes to the gradient norm per group in UTK-Face datase where groups are Male and Female.}
 \label{fig:utk_gender_group_sizes_impact}
\end{figure*}

\paragraph{UTK-Face with age bins}
Similar experiments are performed with UTK-Face on nine age bin groups. Three age bins are randomly chosen, 0, 2, 4, and the number of samples for each group is upsampled in turn. The gradient norms of nine age bin groups are shown in Figure \ref{fig:utk_age_bins_group_sizes}, where the upsampled groups are highlighted with dotted thick lines. The results echo that if a group's number of samples is increased enough, its gradient norm at convergence will be smaller than the other 8 age bin groups. 

 \begin{figure*}[t]
 \centering
\includegraphics[width=0.8\linewidth]{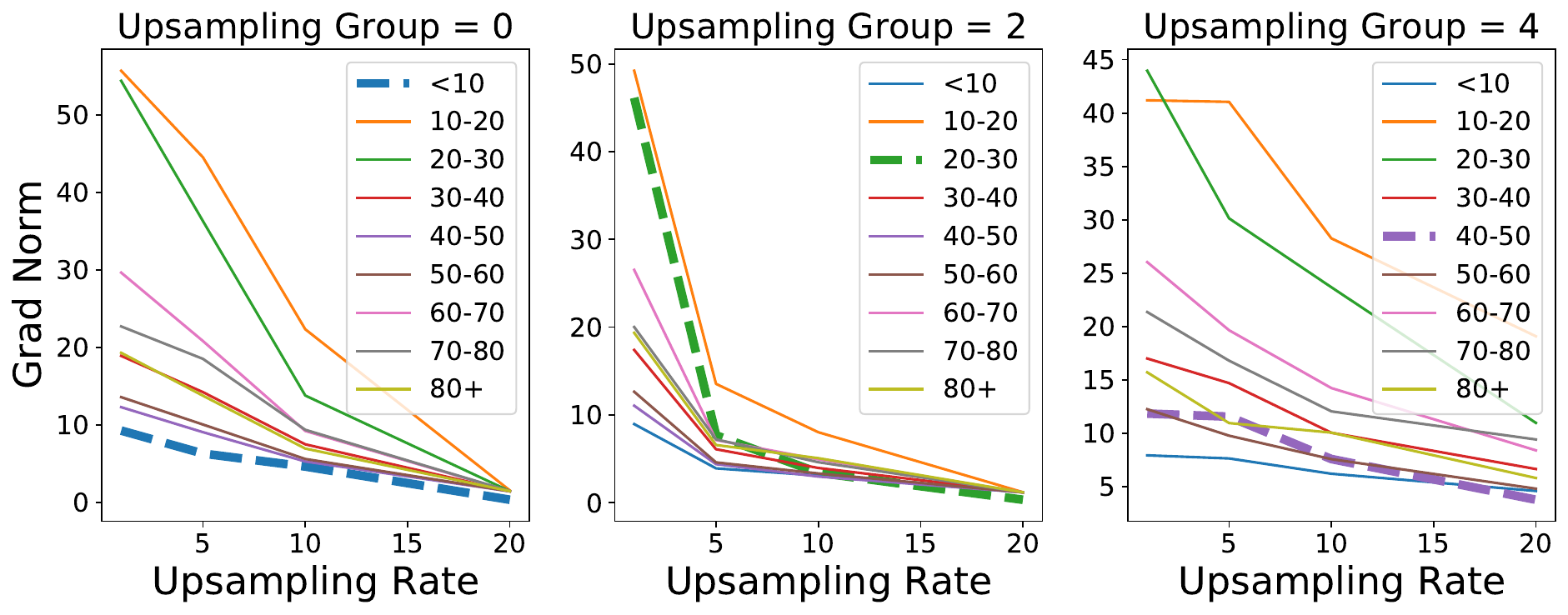}
\caption{Impact of group sizes to the gradient norm per group in UTK-Face dataset where groups are nine age bins. The group with dotted thick line is a \emph{majority group} in each chart.}
 \label{fig:utk_age_bins_group_sizes}
\end{figure*}








\end{document}